\def\eqref#1{equation~\ref{#1}}
\def\1{\bm{1}}
\DeclareMathAlphabet{\mathsfit}{\encodingdefault}{\sfdefault}{m}{sl}
\SetMathAlphabet{\mathsfit}{bold}{\encodingdefault}{\sfdefault}{bx}{n}
\newtheorem{theorem}{Theorem}
\newtheorem{corollary}[theorem]{Corollary}
\newtheorem{proposition}[theorem]{Proposition}
\theoremstyle{definition}
\newtheorem{definition}{Definition}
\newtheorem{example}{Example}
\newenvironment{itemize*}%
  {\begin{itemize}%
    \setlength{\itemsep}{1.5pt}%
    \setlength{\parskip}{2pt}}%
  {\end{itemize}}
\title{Incorporating Symmetry into Deep Dynamics Models for Improved Generalization}
\author{%
  Rui Wang \thanks{Equal contribution} \\
  Computer Science and Engineering \\
  University of California \\
  San Diego, CA 92093  \\
  \texttt{ruw020@ucsd.edu} \\
  \And
  Robin Walters \text{*}  \\
  Khoury College of Computer Science \\
  Northeastern University \\
  Boston, MA 02115 \\
  \texttt{r.walters@northeastern.edu} \\
  \And
  Rose Yu \\
  Computer Science and Engineering \\
  University of California \\
  San Diego, CA 92093 \\
  \texttt{roseyu@ucsd.edu}
}
\begin{document}

\maketitle

\begin{abstract}
Recent work has shown deep learning can accelerate the prediction of physical dynamics relative to numerical solvers. However, limited physical accuracy and an inability to generalize under distributional shift limits its applicability to the real world. We propose to improve accuracy and generalization by incorporating symmetries into convolutional neural networks. Specifically, we employ a variety of methods each tailored to enforce a different symmetry. Our models are both theoretically and experimentally robust to distributional shift by symmetry group transformations and enjoy favorable sample complexity. We demonstrate the advantage of our approach on a variety of physical dynamics including Rayleigh–Bénard convection and real-world ocean currents and temperatures. Compared with image or text applications, our work is a significant step towards applying equivariant neural networks to  high-dimensional systems with complex dynamics. We open-source our simulation, data and code at \url{https://github.com/Rose-STL-Lab/Equivariant-Net}.
\end{abstract}

\section{Introduction}
Modeling dynamical systems in order to forecast the future is of critical importance in a wide range of fields including, e.g., fluid dynamics, epidemiology, economics, and neuroscience \cite{anderson1995computational, hethcote2000mathematics, strogatz2018nonlinear, izhikevich2007dynamical, day1994complex}. Many dynamical systems are described by systems of non-linear differential equations 
that
are difficult to simulate numerically.
Accurate numerical computation thus requires long run times and manual engineering in each application.



Recently, there has been much work applying deep learning to accelerate solving differential equations \cite{tompson2017accelerating, chen2018neural}.
However, current approaches struggle with generalization. The underlying problem is that physical data has no canonical frame of reference to use for data normalization.  For example, it is not clear how to rotate samples of fluid flow such that they share a common orientation.  Thus real-world out-of-distribution test data is difficult to align with training data.  Another limitation of current approaches is low physical accuracy.  Even when mean error is low, errors are often spatially correlated, producing a different energy distribution from the ground truth.



We propose to improve the generalization and physical accuracy of deep learning models for physical dynamics by incorporating symmetries into the forecasting model.  
 In physics, Noether's Law gives a correspondence between conserved quantities and groups of symmetries. 
 By building a neural network which inherently respects a given symmetry,  
 we thus make conservation of the associated quantity more likely and consequently 
 the model's prediction more physically accurate.   

A function $f$ is equivariant if when its input $x$ is transformed by a symmetry group $g$, the output is transformed by the same symmetry,
\[
  f(g \cdot x) = g \cdot f(x).
\]
See Figure \ref{fig:equi} for an illustration.
In the setting of forecasting,  $f$ approximates the underlying dynamical system. The set of valid transformations $g$ is called the symmetry group of the system. 
\begin{wrapfigure}{R}{0.45\linewidth}
\centering
\includegraphics[width= 0.45\textwidth,trim={40 130 40 0}]{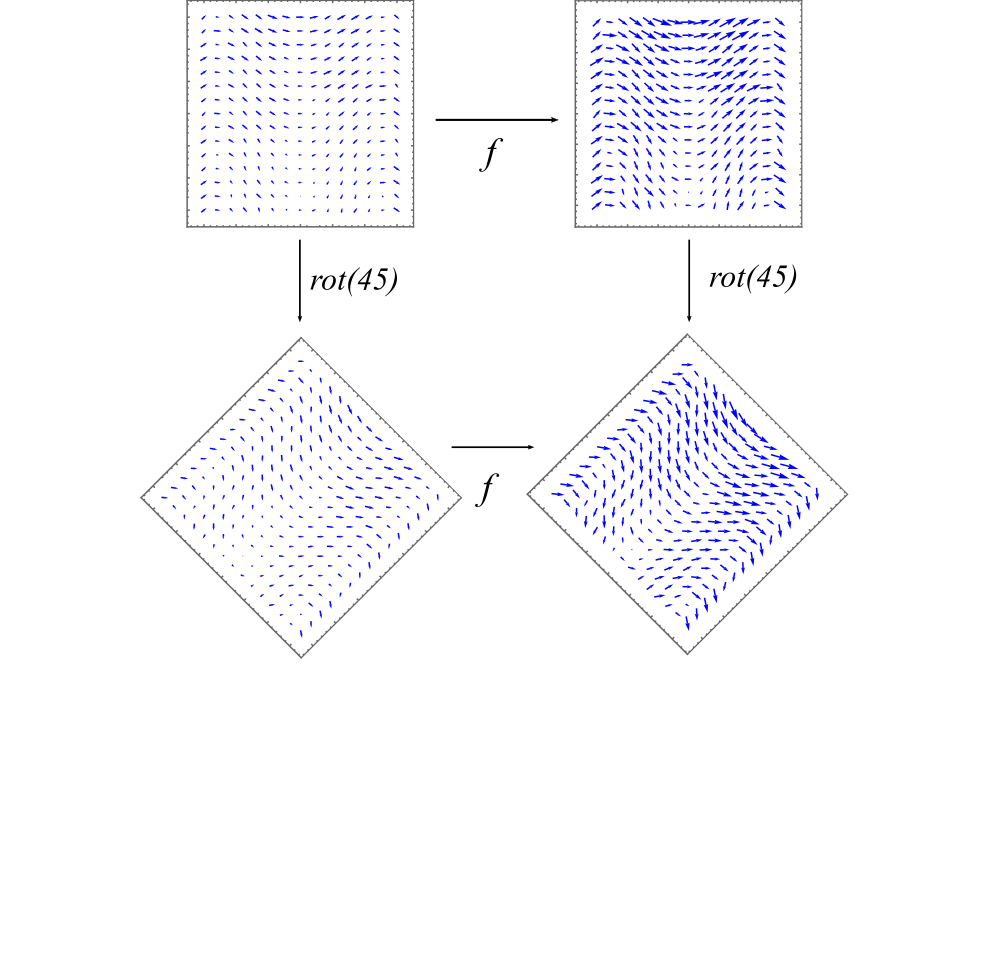}
\caption{Illustration of equivariance of e.g. $f(x)=2x$ with respect to $T = \mathrm{rot}(\pi/4)$.}
\label{fig:equi}
\end{wrapfigure}

 By designing a model that is inherently equivariant to transformations of its input, we can guarantee that our model generalizes automatically across these transformations, making it robust to distributional shift.  The symmetries we consider, translation, rotation, uniform motion, and scale, have different properties,
and thus we tailor our methods for 
incorporating each symmetry.

Specifically, for scale equivariance, we replace the convolution operation with group correlation over the group $G$ generated by translations \textit{and} rescalings.  Our method builds on that of \citet{worrall2019deep}, with significant novel adaptations to the physics domain: scaling affecting time, space, and magnitude; both up and down scaling; and scaling by any real number.  For rotational symmetries, we leverage the key insight of \citet{cohen2016group} that the input, output, and hidden layers of the network are all acted upon by the symmetry group and thus should be treated as representations of the symmetry group. 
Our rotation-equivariant model is built using the flexible $\texttt{E(2)-CNN}$ framework developed by \citet{weiler2019e2cnn}. In the case of a uniform motion, or Galilean transformation, we show the above methods are too constrained.  We use the simple but effective technique of convolutions conjugated by averaging operations.


%

%

%

Research into equivariant neural networks has mostly been applied to tasks such as image classification and segmentation  \cite{kondor2018generalization,weiler2018learning,weiler2019e2cnn}.  
In contrast, we design equivariant networks in a completely different context, that of a time series representing a physical process. 
Forecasting high-dimensional turbulence is a significant step for equivariant neural networks compared to the low-dimensional physics examples and computer vision problems treated in other works. 


We test on a simulated turbulent convection dataset and on real-world ocean current and temperature data.  Ocean currents are difficult to predict using numerical methods due to unknown external forces and complex dynamics not fully captured by simplified mathematical models.  These domains are chosen as examples, but since the symmetries we focus on are pervasive in almost all physics problems,
we expect our techniques will be widely applicable.  Our contributions include:

%
\begin{itemize}[leftmargin=*]
\item We study the problem of improving the generalization capability and physical accuracy of deep learning models for learning complex physical dynamics such as turbulence and ocean currents.
\item We design tailored methods with theoretical guarantees to incorporate various symmetries, including uniform motion, rotation, and scaling, into convolutional neural networks.  
\item When evaluated on turbulent convection and ocean current prediction, our models achieve significant improvement on generalization of both predictions and physical consistency.
\item For different symmetries, our methods have an average $31\%$ and maximum $78\%$ reduction in energy error when evaluated on turbulent convection 
with no distributional shift.   
\end{itemize}
%



\section{Mathematical Preliminaries}


\subsection{Symmetry Groups and Equivariant Functions}\label{subsec:grp}

Formal discussion of symmetry relies on the concept of an abstract symmetry group.  We give a brief overview, for a more formal treatment see Appendix \ref{app:grp}, or \citet{lang}.

A \textbf{group of symmetries} or simply \textbf{group} consists of a set $G$ together with a composition map $\circ \colon G \times G \to G$.  The composition map is required to be associative and have an identity $1 \in G$.  Most importantly, composition with any element of $G$ is required to be invertible.  



Groups are abstract objects, but they become concrete when we let them act. A group $G$ has an \textbf{action} on a set $S$ if there is an action map $\cdot \colon G \times S \to S$ which is compatible with the composition law.  We say further that $S$ is a \textbf{$G$-representation} if the set $S$ is a vector space and the group acts on $S$ by linear transformations. 

\begin{definition}[invariant, equivariant]  Let $f \colon X \to Y$ be a function and $G$ be a group.  Assume $G$ acts on $X$ and $Y$. The function $f$ is \textbf{$G$-equivariant} if $f(gx) = g f(x)$ for all $x \in X$ and $g \in G$. The function $f$ is \textbf{$G$-invariant} if $f(gx) = f(x)$ for all $x \in X$ and $g \in G$.
\end{definition}





\subsection{Physical Dynamical Systems}
We investigate two dynamical systems: Rayleigh–Bénard convection and real-world ocean current and temperature. These systems are governed by Navier-Stokes equations.


\textbf{2D Navier-Stokes (NS) Equations.}
Let $\bm{w}(\bm{x}, t)$ be the velocity vector field of a flow. The field $\bm{w}$ has two components $(u,v)$, velocities along the $x$ and $y$ directions. The governing equations for this physical system are the momentum equation, continuity equation, and temperature equation,
\begin{equation}\label{eqn:NS}\tag{$\mathcal{D}_{\mathrm{NS}}$}
\begin{aligned}    
\frac{\partial \bm{w}}{\partial t}  =  -(\bm{w} \cdot \nabla) \bm{w}-\frac{1}{\rho_0} \nabla p + \nu \nabla^2 \bm{w} + f;  \quad\
\nabla \cdot \bm{w}  = 0; \quad\  \frac{\partial H}{\partial t} = \kappa \Delta H -   (\bm{w} \cdot \nabla) H, 
\end{aligned}
\end{equation}
where $H(\bm{x}, t)$ is temperature, $p$ is pressure,  $\kappa$ is the 
heat conductivity, $\rho_0$ is initial density, $\alpha$ is the coefficient of thermal expansion, $\nu$ is the kinematic viscosity, and $f$ is the buoyant force. 

\subsection{Symmetries of Differential Equations} \label{subsec:symdiffeq}
By classifying the symmetries of a system of differential equations, the task of finding solutions is made far simpler, since the space of solutions will exhibit those same symmetries.
Let $G$ be a group equipped with an action on 2-dimensional space $X=\mathbb{R}^2$ and 3-dimensional spacetime $\hat{X} = \mathbb{R}^3$. Let  $V=\mathbb{R}^d$ be a $G$-representation.   Denote the set of all \textbf{$V$-fields} on $\hat{X}$ as 
    $\hat{\mathcal{F}}_V = \lbrace \bm{w} \colon \hat{X} \to V : \bm{w} \text{ smooth} \rbrace.$
Define $\mathcal{F}_V$ similarly to be $V$-fields on $X$.  Then $G$ has an induced action on $\hat{\mathcal{F}}_V$ by $(g\bm{w})(x,t) = g(\bm{w}(g^{-1}x,g^{-1}t))$ and on $\mathcal{F}_V$ analogously.      

Consider a system of 
differential operators $\mathcal{D}$ acting on $\hat{\mathcal{F}}_V$.  Denote the set of solutions $\mathrm{Sol}(\mathcal{D}) \subseteq \hat{\mathcal{F}}_V.$ We say $G$ is \textbf{a symmetry group of $\mathcal{D}$} if $G$ preserves $\mathrm{Sol}(\mathcal{D})$. That is, if $\varphi$ is a solution of $\mathcal{D}$, then for all $g \in G$, $g(\varphi)$ is also. 
In order to forecast the evolution of a system $\mathcal{D}$, we model the forward prediction function $f$. Let $\bm{w} \in \mathrm{Sol}(\mathcal{D})$.  The input to $f$ is a collection of $k$ snapshots at times $t - k,\ldots,t-1$ denoted $\bm{w}_{t-i} \in  \mathcal{F}_d$.  The prediction function $f\colon \mathcal{F}_d^k \to \mathcal{F}_d$ is defined $f(\bm{w}_{t-k},\ldots,\bm{w}_{t-1}) = \bm{w}_{t}$.  It predicts the solution at a time $t$ based on the solution in the past. 
Let $G$ be a symmetry group of $\mathcal{D}$.  Then for $g \in G$, $g(\bm{w})$ is also a solution of $\mathcal{D}$.  Thus  $f(g\bm{w}_{t-k},\ldots,g\bm{w}_{t-1}) = g\bm{w}_{t}$.  Consequently, $f$ is $G$-equivariant.

\subsection{Symmetries of Navier-Stokes equations}
The Navier-Stokes equations are invariant under the following five different transformations. Individually each of these types of transformations generates a group of symmetries of the system. The full list of symmetry groups of NS equations and Heat equations are shown in Appendix \ref{lstsymm}.
\begin{itemize*}
    \item Space translation: \; $T_{\bm{c}}^{\mathrm{sp}}\bm{w}(\bm{x}, t) = \bm{w}(\bm{x-c}, t)$, \; $\bm{c} \in \mathbb{R}^2$,
    \item Time translation: \;$T_{\tau}^{\mathrm{time}}\bm{w}(\bm{x}, t) = \bm{w}(\bm{x}, t-\tau)$, \; $\tau \in \mathbb{R}$,
    \item Uniform motion: \;$T_{\bm{c}}^{\mathrm{um}}\bm{w}(\bm{x}, t) = \bm{w}(\bm{x}, t) + \bm{c}$, \; $\bm{c} \in \mathbb{R}^2$,
    \item Rotation/Reflection: \; $T_R^{\mathrm{rot}}\bm{w}(\bm{x}, t) = R\bm{w}(R^{-1}\bm{x}, t), \; R \in O(2)$,
    \item Scaling: \; $T_{\lambda}^{sc}\bm{w}(\bm{x},t) = \lambda\bm{w}(\lambda\bm{x}, \lambda^2t)$, \; $\lambda \in \mathbb{R}_{>0}$.
\end{itemize*}

\section{Methodology}
We prescribe equivariance 
by training within function classes containing only equivariant functions. Our models can thus be theoretically guaranteed to be equivariant up to discretization error. 
We incorporate equivariance into two state-of-the-art architectures for dynamics prediction, \texttt{ResNet} and \texttt{U-net} \cite{rui2019tf}.  Below, we describe how we modify the convolution operation in these models for different symmetries $G$ to form four $\texttt{Equ$_{G}$-ResNet}$ and four $\texttt{Equ$_{G}$-Unet}$ models.

\subsection{Equivariant Networks}\label{subsec:equivnets}
The key to building equivariant networks is that the composition of equivariant functions is equivariant.  Hence, if the maps between layers of a neural network are equivariant, then the whole network will be equivariant.  Note that both the linear maps and activation functions must be equivariant.  
An important consequence of this principle is that the hidden layers must also carry a $G$-action.
Thus, the hidden layers are not collections of scalar channels, but vector-valued $G$-representations.  
%

\textbf{Equivariant Convolutions.}
Consider a convolutional layer $\mathcal{F}_{\mathbb{R}^{d_\mathrm{in}}} \to \mathcal{F}_{\mathbb{R}^{d_\mathrm{out}}}$ with kernel $K$ from a $\mathbb{R}^{d_\mathrm{in}}$-field to a $\mathbb{R}^{d_\mathrm{out}}$-field.  Let $\mathbb{R}^{d_\mathrm{in}}$ and $\mathbb{R}^{d_\mathrm{out}}$ be $G$-representations with action maps $\rho_{\mathrm{in}}$ and $\rho_{\mathrm{out}}$ respectively.   \citet[Theorem 3.3]{cohen2019general} prove 
the network is $G$-equivariant   if and only if 
\begin{align}
K(gv) = \rho_{\mathrm{out}}^{-1}(g)K( v)\rho_{\mathrm{in}}(g) \text{\qquad for all $g \in G$.} \label{rot_sym}
\end{align}  
A network composed of such equivariant convolutions is called a \textit{steerable CNN.}   

\textbf{Equivariant \texttt{ResNet} and \texttt{U-net}.}  Equivariant $\texttt{ResNet}$ architectures appear in \cite{cohen2016group, cohen2016steerable}, and equivariant transposed convolution, a feature of \texttt{U-net}, is implemented in \cite{weiler2019e2cnn}.  We prove in general that adding skip connections to a network does not affect its equivariance with respect to linear actions and also give a condition for \texttt{ResNet} or \texttt{Unet} to be equivariant in Appendix \ref{app:resnetequ}.

\textbf{Relation to Data Augmentation.}  To improve generalization, equivariant networks offer a better performing alternative to the popular technique of data augmentation \cite{dao2019kernel}.  Large symmetry groups normally require augmentation with many transformed examples. In contrast, for equivariant models, we have following proposition. (See Appendix \ref{app:data-aug} for proof.)
\begin{proposition}\label{dataaug} $G$-equivariant models with equivariant loss learn equally (up to sample weight) from any transformation $g(s)$ of a sample $s$. Thus data augmentation does not help during training.
\end{proposition}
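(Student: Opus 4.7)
The plan is to show that if $f_\theta$ is a $G$-equivariant model (for every choice of parameters $\theta$) and $L$ is a loss that is $G$-invariant in the sense $L(g\hat{y}, gy) = L(\hat{y}, y)$ for all $g \in G$, then for any labeled sample $s = (x, y)$ and any $g \in G$, the per-sample loss at $g(s) := (gx, gy)$ equals the per-sample loss at $s$. From this identity the proposition follows almost immediately, because gradient-based training only sees a sample through its loss value and its gradient with respect to $\theta$.

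The key computation proceeds in two lines. First, apply $G$-equivariance of $f_\theta$:
\begin{equation*}
  L\bigl(f_\theta(gx),\, gy\bigr) \;=\; L\bigl(g f_\theta(x),\, gy\bigr).
\end{equation*}
Then apply $G$-invariance of the loss to cancel the common $g$:
\begin{equation*}
  L\bigl(g f_\theta(x),\, gy\bigr) \;=\; L\bigl(f_\theta(x),\, y\bigr).
\end{equation*}
Since this identity holds for every parameter setting $\theta$, differentiating both sides with respect to $\theta$ yields $\nabla_\theta L(f_\theta(gx), gy) = \nabla_\theta L(f_\theta(x), y)$. Hence any minibatch gradient contribution coming from $g(s)$ is exactly the gradient contribution of $s$.

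To finish, I would note that training loss is a sum over samples, so including a transformed copy $g(s)$ in the training set produces the same update as increasing the sample weight of $s$ by one. More generally, including a family $\{g_i(s)\}_{i=1}^k$ of augmented copies is equivalent to weighting $s$ by $k$, which can be absorbed into the learning rate or dataset reweighting. Therefore data augmentation by elements of $G$ contributes no new information during training beyond reweighting, which establishes the proposition.

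The only real subtlety, and thus the main thing to pin down cleanly, is stating precisely what "equivariant loss" means: for losses such as squared error under a linear (in particular orthogonal) $G$-action this invariance holds because $\|g\hat{y} - gy\|^2 = \|\hat{y} - y\|^2$, and this is the natural hypothesis used in the two-line argument above. I would state this explicitly so that the $g$ on the two arguments of $L$ can be cancelled without further comment.
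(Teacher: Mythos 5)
Your two-line computation is exactly the paper's argument, but you prove only the special case where the loss is $G$-\emph{invariant}, $\mathcal{L}(g\hat{y},gy)=\mathcal{L}(\hat{y},y)$, whereas the paper's Proposition \ref{prop-same-learn} allows the loss to be $G$-\emph{equivariant} with respect to a character $\chi\colon G\to\mathbb{R}$ acting on the output of $\mathcal{L}$, i.e.\ $\mathcal{L}(g\hat{y},gy)=\chi(g)\,\mathcal{L}(\hat{y},y)$, yielding $\nabla_\theta\mathcal{L}(f_\theta(gx),gy)=\chi(g)\,\nabla_\theta\mathcal{L}(f_\theta(x),y)$. This generality is not cosmetic: it is precisely where the ``up to sample weight'' clause in the statement comes from, and it is needed for the scale symmetry $T_\lambda$, under which the loss is not invariant but picks up the factor $\chi(\lambda)=\lambda$; your version attributes the sample-weight caveat only to multiplicity of augmented copies, which does not cover that case. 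You correctly flag that the meaning of ``equivariant loss'' is the one subtlety to pin down, but you resolve it by restricting to invariant losses (orthogonal actions), which silently excludes one of the four symmetry groups the paper actually uses. The fix is a one-line change --- carry the $\chi(g)$ factor through your second displayed equation via $\mathcal{L}(gf_\theta(x),gy)=\chi(g)\mathcal{L}(f_\theta(x),y)$ --- after which your argument coincides with the paper's.
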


\subsection{Time and Space Translation Equivariance}
CNNs are  time translation-equivariant as long as we predict in an autoregressive manner. Convolutional layers are also naturally space translation-equivariant (if cropping is ignored). Any activation function which acts identically pixel-by-pixel is equivariant. 


\subsection{Rotational Equivariance }\label{sec:roteq}
To incorporate rotational symmetry, we model $f$ using $\mathrm{SO}(2)$-equivariant convolutions and activations within the \texttt{E(2)-CNN} framework of \citet{weiler2019e2cnn}. In practice, we use the cyclic group $G = C_n$ instead of $G=\mathrm{SO}(2)$ as for large enough $n$ the difference is practically indistinguishable due to space discretization.  We use powers of the regular representation $\rho = \mathbb{R}[C_n]^m$ for hidden layers. The representation $\mathbb{R}[C_n]$ has basis given by elements of $C_n$ and $C_n$-action by permutation matrices.  It has good descriptivity since it contains all irreducible representations of $C_n$, and it is compatible with any activation function applied channel-wise.

\subsection{Uniform Motion Equivariance }\label{umsection}
Uniform motion is part of Galilean invariance and is relevant to all non-relativistic physics modeling. For a vector field $X \colon \mathbb{R}^2 \to \mathbb{R}^2$ and vector $c \in \mathbb{R}^2$, uniform motion transformation is adding a constant vector field to the vector field $X(v)$, $T^{\mathrm{um}}_{\bm{c}}(X)(v) = X(v) + \bm{c}, \bm{c} \in \mathbb{R}^2$. By the following corollary, proved in Appendix \ref{app:uniform}, enforcing uniform motion equivariance as above by requiring all layers of the \texttt{CNN} to be equivariant severely limits the model.  
\begin{corollary}\label{cor:umaffpaper}
If $f$ is a $\texttt{CNN}$ alternating between convolutions $f_i$ and channel-wise activations $\sigma_i$ and the combined layers $\sigma_i \circ f_i$ are uniform motion equivariant, then $f$ is affine.
\end{corollary}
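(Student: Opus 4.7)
The plan is to show that uniform-motion equivariance of each block $\sigma_i \circ f_i$ pins the channel-wise activations down to affine functions, so that each block is affine and hence $f$, being their composition, is affine.

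First, I would identify the $G$-action on every layer. The uniform motion group acts on a velocity field by $(g_c \bm{w})(x) = \bm{w}(x) + c$ (the accompanying spatial translation is already absorbed by translation equivariance, as established in Section 3.2). Because the input and output of $f$ are velocity fields carrying this shift action and the $f_i$ are linear, the natural representations on the hidden layers $V^{(i)} = \mathbb{R}^{d_i}$ are shift actions of the form $v(x) \mapsto v(x) + A_i(c)$ for some linear maps $A_i \colon \mathbb{R}^d \to \mathbb{R}^{d_i}$, with $A_0$ and $A_n$ the identity stackings on the input and output velocity channels. Applying $f_i$, a linear convolution with aggregate weight $\bar K_i = \sum_y K_i(y)$, to a constant field $A_{i-1}(c)\cdot\mathbf{1}$ produces the constant field $\bar K_i\, A_{i-1}(c) \cdot\mathbf{1}$, so equivariance of $f_i$ forces $\bar K_i\, A_{i-1}(c) = A_i(c)$ for every $c$.

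Next, I combine this with equivariance of the composed block $\sigma_i \circ f_i$. Because $f_i$ turns a constant shift $A_{i-1}(c)$ of its input into a constant shift $A_i(c)$ of its output, equivariance forces the channel-wise activation $\sigma \colon \mathbb{R}^{d_i} \to \mathbb{R}^{d_i}$ to satisfy
\[
  \sigma\bigl(u + A_i(c)\bigr) = \sigma(u) + A_i(c) \qquad \text{for all } u \in \mathbb{R}^{d_i},\ c \in \mathbb{R}^d.
\]
Since $\sigma$ is applied channel-wise as a common scalar function $\sigma_0 \colon \mathbb{R} \to \mathbb{R}$, and every channel in a faithful shift representation is acted on non-trivially by some $c$, this componentwise reduces to $\sigma_0(x + s) = \sigma_0(x) + s$ for every $x, s \in \mathbb{R}$. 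Setting $x = 0$ gives $\sigma_0(s) = s + \sigma_0(0)$, so $\sigma_0$ is affine, and therefore each $\sigma_i$ is an affine map.

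With each $\sigma_i$ affine and each $f_i$ linear, every block $\sigma_i \circ f_i$ is affine, and the composition of affine maps is affine, so $f$ is affine. The main obstacle is handling any hidden channel whose shift coordinate is degenerate, i.e., a row of $A_i$ that vanishes: on such a channel $\sigma_0$ is a priori unconstrained. I would resolve this by noting that such a channel splits off as a $G$-invariant subrepresentation which neither transmits nor receives uniform motion; on the shift-propagating complement the argument above forces $\sigma_0$ to be affine, and the invariant summand contributes only a constant, so $f$ remains affine overall.
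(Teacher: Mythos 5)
Your overall strategy matches the paper's: derive a one-variable functional equation that the shared scalar activation must satisfy, conclude the activation is affine, and finish by noting that a composition of affine blocks is affine. The paper does this via Proposition \ref{prop:combined_layers}, which keeps the standard action $\bm{w} \mapsto \bm{w} + c$ on both the input and output of each block and derives $\sigma(\beta + \lambda c) = \sigma(\beta) + c$ with $\lambda = \sum_v K(v)$, so that $\sigma$ has constant slope $1/\lambda$ and the block equals $K/\lambda \ast \bm{X} + b$.

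There is, however, one step where you use a hypothesis you do not have: you invoke ``equivariance of $f_i$'' to force $\bar K_i\, A_{i-1}(c) = A_i(c)$. The corollary only assumes that the \emph{combined} layers $\sigma_i \circ f_i$ are equivariant; the convolution alone need not be, and the whole point of the paper's Proposition \ref{prop:combined_layers} is that the activation can compensate for a convolution that rescales constants ($\lambda \neq 1$). Without that unjustified normalization, the correct constraint is $\sigma_0\bigl(u + (\bar K_i A_{i-1}(c))_j\bigr) = \sigma_0(u) + (A_i(c))_j$, where the incoming and outgoing shifts are \emph{different} linear functions of $c$. The conclusion survives --- restricting $c$ to a line on which the incoming shift is surjective shows $\sigma_0$ is affine with slope equal to the ratio of the two linear forms --- but as written your argument silently assumes slope $1$, which excludes exactly the case the paper's lemma is built to handle. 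Relatedly, your closing paragraph on degenerate channels is muddled: since $\sigma_0$ is one scalar function shared across channels, a single non-degenerate channel pins it down everywhere, and the degenerate channels then carry $\sigma_0$ composed with a linear map, which is affine; they do not merely ``contribute a constant.'' You would also need to rule out a hidden layer on which the shift action is entirely trivial (all of $A_i=0$), which follows by propagating the nontrivial output action backwards through the final block. These are repairable slips rather than a wrong approach, but they are precisely the points at which the paper's single-layer proposition is more careful.
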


To overcome this limitation, we relax the requirement by conjugating the model with shifted  input distribution. For each sliding local block in each convolutional layer, we shift the mean of input tensor to zero and shift the output back after convolution and activation function per sample. In other words,  if the input is $\bm{\mathcal{P}}_{b \times d_{in} \times s\times s} $ and the output is $\bm{\mathcal{Q}}_{b \times d_{out}} = \sigma(\bm{\mathcal{P}} \cdot K)$ for one sliding local block, where $b$ is batch size, $d$ is number of channels, $s$ is the kernel size, and $K$ is the kernel, then
\begin{align}
\bm{\mu}_i = \mathrm{Mean}_{jkl} \left(\bm{\mathcal{P}}_{ijkl}\right); \quad
\bm{\mathcal{P}}_{ijkl}\mapsto \bm{\mathcal{P}}_{ijkl} - \bm{\mu}_i; \quad
\bm{\mathcal{Q}}_{ij} \mapsto \bm{\mathcal{Q}}_{ij} + \bm{\mu}_i.
\end{align}
This will allow the convolution layer to be equivariant with respect to uniform motion. If the input is a vector field, we apply this operation to each element.
\begin{proposition}\label{um_residual_paper}
A residual block $f(\bm{x}) + \bm{x}$ is uniform motion equivariant if the residual connection $f$ is uniform motion invariant. 
\end{proposition}

By the proposition \ref{um_residual_paper} above that is proved in Appendix \ref{app:uniform}, within \texttt{ResNet}, residual mappings should be \textit{invariant}, not equivariant, to uniform motion.  That is, the skip connection $f^{(i,i+2)} = I$ is equivariant and the residual function $f^{(i,i+1)}$ should be invariant. Hence, for the first layer in each residual block, we omit adding the mean back to the output $\bm{\mathcal{Q}_{ij}}$.  In the case of \texttt{Unet}, when upscaling, we pad with the mean to preserve the overall mean. 




\subsection{Scale Equivariance}
Scale equivariance in dynamics is unique as the physical law dictates the scaling of  magnitude, space and time simultaneously.  This is very different from scaling in images regarding resolutions \cite{worrall2019deep}.  For example, the Navier-Stokes equations are preserved under a specific scaling ratio of time, space, and velocity given by the transformation 
\begin{equation}\label{scale-equivariance} 
    T_\lambda \colon \bm{w}(\bm{x},t) \mapsto \lambda\bm{w}(\lambda \bm{x},\lambda^2t),
\end{equation} 
where $\lambda \in \mathbb{R}_{>0}$. We implement two different approaches for scale equivariance, depending on whether we tie the physical scale with the resolution of the data. 



\textbf{Resolution Independent Scaling.}
We fix the resolution and  scale the magnitude of the input by varying the discretization step size.  An input $\bm{w} \in \mathcal{F}_{\mathbb{R}^2}^k$ with step size  $\Delta_x(\bm{w})$ and  $\Delta_t(\bm{w})$ can be scaled $\bm{w}' = T_{\lambda}^{sc}(\bm{w}) = \lambda \bm{w}$ by scaling the magnitude of vector alone, provided the discretization constants are now assumed to be $\Delta_x(\bm{w}') = 1/\lambda \Delta_x(\bm{w}) $ and $\Delta_t(\bm{w}') = 1/\lambda^2 \Delta_t(\bm{w})$. 
We refer to this as \textit{magnitude} equvariance hereafter.

To obtain magnitude equivariance, we divide the input tensor by the MinMax scaler (the maximum of the tensor minus the minimum) and scale the output back after convolution and activation per sliding block. We found that the standard deviation and mean L2 norm may work as well but are not as stable as the MinMax scaler. Specifically, using the same notation as in Section \ref{umsection}, 
\begin{align}
\bm{\mathcal{\sigma}}_i = \mathrm{MinMax}_{jkl} \left( \bm{\mathcal{P}}_{ijkl}\right); \quad
\bm{\mathcal{P}}_{ijkl} \mapsto \bm{\mathcal{P}}_{ijkl} / \bm{\mathcal{\sigma}}_i;\quad
\bm{\mathcal{Q}}_{ij} \mapsto \bm{\mathcal{Q}}_{ij} \cdot \bm{\mathcal{\sigma}}_i.
\end{align}


\textbf{Resolution Dependent Scaling.} 
If the physical scale of the data is fixed, then scaling 
corresponds to a change in resolution and time step size.
To achieve this, we replace the convolution layers with group correlation layers over the group $G=(\mathbb{R}_{>0},\cdot)\ltimes(\mathbb{R}^2,+)$ of scaling and translations.  
In convolution, we translate a kernel $K$ across an input $\bm{w}$ as such $\bm{v}(\bm{p}) = \sum_{\bm{q} \in \mathbb{Z}^2} \bm{w}(\bm{p}+\bm{q}) K(\bm{q}).$  The $G$-correlation upgrades this operation by both translating \textit{and} scaling the kernel relative to the input, 
\begin{equation}\label{eqn:groupcorr}
  \bm{v}(\bm{p}, s, \mu) = \sum_{\lambda \in \mathbb{R}_{>0}, t \in \mathbb{R}, \bm{q} \in \mathbb{Z}^2 } \lambda   \bm{w}(\lambda \bm{p} +  \bm{q}, \lambda^2 t,  \lambda  \mu ) K(\bm{q},s,t,\lambda),
\end{equation}

where $s$ and $t$ denote the indices of output and input channels respectively. We add an axis to the tensors corresponding the scale factor $\mu$. Note that we treat the channel as a time dimension both with respective to our input and scaling action. As a consequence, as the number of channels increases in the lower layers of \texttt{Unet} and \texttt{ResNet}, the temporal resolution increases, which is analogous to temporal refinement in numerical methods \citep{kim1990local, lisitsa2012finite}. For the input $\tilde{\bm{w}}$ of first layer where $\tilde{\bm{w}}$ has no levels originally, $\bm{w} ( p, s , \lambda) =\lambda \tilde{\bm{w}} (\lambda p, \lambda^2 s)$. 

Our model builds on the methods of \citet{worrall2019deep}, but with important adaptations for the physical domain.  
Our implementation of group correlation \eqref{eqn:groupcorr} directly incorporates the physical scaling law \eqref{scale-equivariance} of the system  \eqref{eqn:NS}. This affects time, space, and magnitude. (For heat, we drop the magnitude scaling.)   
The physical scaling law dictates our model should be equivariant to both up and down scaling and by any $\lambda \in \mathbb{R}_{>0}$. 
Practically, the sum is truncated to 7 different $1/3 \leq \lambda \leq 3$ and discrete data is continuously indexed using interpolation.
Note \eqref{scale-equivariance} demands we scale \textit{anisotropically}, i.e. differently across time and space.
%

\section{Related work}
\vspace{-3mm}
\paragraph{Equivariance and Invariance.}
Developing neural nets that preserve symmetries has been a fundamental task in image recognition \cite{cohen2019gauge,  weiler2019e2cnn, cohen2016group, chidester2018rotation, Lenc2015understanding, kondor2018generalization, bao2019equivariant, worrall2017harmonic, cohen2016steerable, finzi2020generalizing, weiler2018learning, dieleman2016cyclic, Ghosh19Scale}. But these models have never been applied to forecasting physical dynamics. \citet{jaiswal2019adversarial, moyer2018adversarial} proposed approaches to find representations of data that are invariant to changes in specified factors, which is different from our physical symmetries. \citet{ling2016ra} and \citet{fang2018deep} studied tensor invariant neural networks to learn the Reynolds stress tensor while preserving Galilean invariance, and \citet{Mattheakis2019PhysicalSE} embedded even/odd symmetry of a function and energy conservation into neural networks to solve differential equations. But these two papers are limited to fully connected neural networks.  \citet{Sosnovik2020Scale-Equivariant} extend \citet{worrall2019deep} to group correlation convolution. But these two papers are limited to 2D images and are not magnitude equivariant, which is still inadequate for fluid dynamics. \citet{Bekkers2020B-Spline} describes principles for endowing a neural architecture with invariance with respect to a Lie group. 
\vspace{-3mm}
\paragraph{Physics-informed Deep Learning.} 
Deep learning models have been used often to model physical dynamics. For example, \citet{rui2019tf} unified the CFD technique and U-net to generate predictions with higher accuracy and better physical consistency. \citet{kim2019deep} studied unsupervised generative modeling of turbulent flows but the model is not able to make real time future predictions given the historic data. \citet{Cormorant} designed rotationally covariant neural network for learning molecular systems. \citet{raissi2017physics, mazier1} applied deep neural networks to solve PDEs automatically but these approaches require explicit input of boundary conditions during inference, which are generally not available in real-time. \citet{Arvind} proposed a purely data-driven DL model for turbulence, but the model lacks physical constraints and interpretability.  \citet{jinlong_2019} and \citet{constrain1} introduced statistical and physical constraints in the loss function to regularize the predictions of the model. However, their studies only focused on spatial modeling without temporal dynamics. \citet{Morton2018Deep} incorporated Koopman theory into a encoder-decoder architecture but did not study the symmetry of fluid dynamics. 
\vspace{-3mm}
\paragraph{Video Prediction.} 
Our work is related to future video prediction. Conditioning on the observed frames, video prediction models are trained to predict future frames, e.g., \cite{mathieu2015deep, finn2016unsupervised, xue2016visual, villegas2017decomposing, Oprea2020ARO, finn2016unsupervised}. Many of these models are trained on natural videos with complex noisy data from unknown physical processes.  Therefore, it is difficult to explicitly incorporate physical principles into these models. Our work is substantially different because we do not attempt to predict object or camera motions. 


\section{Experiments}
We test our models on Rayleigh-B\'enard convection and real-world ocean currents.  We also evaluated on the heat diffusion systems, see  Appendix \ref{heat_diffusion} for more results.  The implementation details and a detailed description of energy spectrum error can be found in Appendices \ref{implementation} and \ref{energy_spectrum}.



\vspace{-3mm}
\paragraph{Evaluation Metrics.} 
Our goal is to show that adding symmetry improves both the  accuracy and the physical consistency of  predictions. For accuracy, we use  Root Mean Square Error ({RMSE}) between the forward predictions and the ground truth over all pixels. For physical consistency, we calculate  the Energy Spectrum Error ({ESE}) which is the RMSE of the log of energy spectrum. {ESE} can indicate whether the predictions preserve the correct statistical distributions of the fluids and obey the energy conservation law, which is a critical metric for physical consistency.  
\vspace{-3mm}
\paragraph{Experimental Setup.} 
\texttt{ResNet}\citep{resnet} and \texttt{U-net}\citep{unet} are the best-performing models for our tasks \citep{rui2019tf} and are well-suited for our tasks. Thus, we implemented these two convolutional architectures equipped with four different symmetries, which we name \texttt{Equ-ResNet(U-net)}. We use a rolling window approach to generate sequences with step size 1 for the RBC data and step size 3 for the ocean data. All models predict raw velocity and temperature fields up to $10$ steps ahead auto-regressively. We use the MSE loss function that accumulates the forecasting errors. We split the data  60\%-20\%-20\% for training-validation-test across time and report mean errors over five random runs. 


\subsection{Equivariance Errors}
The equivariance errors can be defined as $\mathrm{EE}_T(x) = |T(f(x)) - f(T(x))|$, where $x$ is an input, $f$ is a neural net, $T$ is a transformation from a symmetry group. We empirically measure the equivariance errors of all equivariant models we have designed. Table \ref{TTE} shows the equivariance errors of \texttt{ResNet} and \texttt{Equ-ResNet}. The transformation  $T$ is sampled in the same way as we generated the transformed Rayleigh-B\'enard Convection test sets. See more details in Appendix \ref{EquiError}.

\subsection{Experiments on Simulated Rayleigh-B\'enard Convection Dynamics}
\textbf{Data Description.}\ Rayleigh-B\'enard Convection occurs in a horizontal layer of fluid heated from below and is a major feature of the  El Ni\~{n}o dynamics. The dataset comes from two-dimensional turbulent flow simulated using the Lattice Boltzmann Method \citep{Dragos-thesis} with Rayleigh number $2.5\times 10^8$. We divide each 1792 $\times$ 256 image into 7 square subregions of size 256 $\times$ 256, then downsample to 64 $\times$ 64 pixels. To test the models' generalization ability, we generate additional four test sets :
1) \emph{UM}: added random vectors drawn from $U(-1, 1)$;
2) \emph{Mag}: multiplied by random values sampled from $U(0, 2)$;
3) \emph{Rot}: randomly rotated by the multiples of $\pi/2$;
4) \emph{Scale}: scaled by $\lambda$ sampled from $U(1/5, 2)$.
Due to lack of a fixed reference frame, real-world data would be transformed relative to training data. We use transformed data to mimic this scenario.

\newcolumntype{P}[1]{>{\centering\arraybackslash}p{#1}}
\newcommand{\testhead}[1]{\phantom{11}\emph{#1}}
\begin{wraptable}{r}{0.37\linewidth}
\vspace{-10pt}
\centering
\scriptsize
\begin{center}
\caption{Equivariance Errors of \texttt{ResNet(Unets)} and \texttt{Equ-ResNet(Unets)}.}
\label{TTE}
\begin{tabular}{P{1cm}P{0.5cm}P{0.5cm}P{0.5cm}P{0.6cm}}\toprule
$\mathrm{EE}_T(10^3)$ & \testhead{UM} & \testhead{Mag} & \testhead{Rot} & \phantom{a}\emph{Scale}\\
\midrule
\texttt{ResNets} & 2.010 & 1.885 & 5.895 & 1.658 \\
$\texttt{Equ$_{\text{ResNets}}$}$ & 0.0 & 0.0 & 1.190 & 0.579 \\
\midrule
\texttt{Unets} & 1.070 & 0.200 & 1.548 & 1.809 \\
$\texttt{Equ$_{\text{Unets}}$}$ & 0.0 & 0.0 & 0.794 & 0.481 \\
\bottomrule
\end{tabular}
\end{center}
\vspace{-10pt}
\end{wraptable}

\begin{table*}[tb!]
\vspace{-15pt}
\centering
\scriptsize
\begin{center}
\caption{The RMSE and ESE of the \texttt{ResNet(Unet)} and four \texttt{Equ-ResNets(Unets)} predictions on the original and four transformed test sets of Rayleigh-B\'enard Convection. \texttt{Augm} is \texttt{ResNet(Unet)} trained on the augmented training set with additional samples applied with random transformations from the relevant symmetry group. Each column contains all models' prediction errors on the original test set and four different transformed test sets.}
\label{maintable}
\begin{tabular}{p{0.75cm}P{0.8cm}P{0.8cm}P{0.85cm}P{0.8cm}P{0.95cm}@{\hskip 6mm}P{0.8cm}P{0.8cm}P{0.8cm}P{0.8cm}P{0.95cm}}\toprule
\multirow{2}{*}{}& \multicolumn{5}{c}{$\textrm{Root Mean Square Error} (10^3)$}  &  \multicolumn{5}{c}{$\textrm{Energy Spectrum Errors}$} \\
\cmidrule(l{4mm}r{6mm}){2-6} \cmidrule(lr){7-11}
& \testhead{Orig} & \testhead{UM} & \testhead{Mag} & \testhead{Rot} & \phantom{a}\emph{Scale}  &  \testhead{Orig} & \testhead{UM} & \testhead{Mag} & \testhead{Rot} & \phantom{a}\emph{Scale} \\ \midrule
$\textbf{\texttt{ResNet}}$  & 0.67\scriptsize{$\pm$0.24} & 2.94\scriptsize{$\pm$0.84} & 4.30\scriptsize{$\pm$1.27} & 3.46\scriptsize{$\pm$0.39} & 1.96\scriptsize{$\pm$0.16}  & 0.46\scriptsize{$\pm$0.19} & 0.56\scriptsize{$\pm$0.29}& 0.26\scriptsize{$\pm$0.14} & 1.59\scriptsize{$\pm$0.42} &  4.32\scriptsize{$\pm$2.33}\\
$\textbf{\texttt{Augm}}$  &  & 1.10\scriptsize{$\pm$0.20} & 1.54\scriptsize{$\pm$0.12} & 0.92\scriptsize{$\pm$0.09} & 1.01\scriptsize{$\pm$0.11}  & & 1.37\scriptsize{$\pm$0.02} & 1.14\scriptsize{$\pm$0.32} & 1.92\scriptsize{$\pm$0.21}  & 1.55\scriptsize{$\pm$0.14} \\
$\texttt{Equ$_{\text{UM}}$}$  & 0.71\scriptsize{$\pm$0.26} & \textbf{0.71\scriptsize{$\pm$0.26}} &  &  &  & 0.33\scriptsize{$\pm$0.11} & \textbf{0.33\scriptsize{$\pm$0.11}}  &  &  &\\
$\texttt{Equ$_{\text{Mag}}$}$  & 0.69\scriptsize{$\pm$0.24} & & \textbf{0.67\scriptsize{$\pm$0.14}} &   &   &0.34\scriptsize{$\pm$0.09} &  & \textbf{0.19\scriptsize{$\pm$0.02}} &  &\\
$\texttt{Equ$_{\text{Rot}}$}$  & \textbf{0.65\scriptsize{$\pm$0.26}} &  &  & \textbf{0.76\scriptsize{$\pm$0.02}} &  & 0.31\scriptsize{$\pm$0.06} &&&\textbf{1.23\scriptsize{$\pm$0.04}}  &\\
$\texttt{Equ$_{\text{Scal}}$}$    & 0.70\scriptsize{$\pm$0.02} &  &  &  &\textbf{0.85\scriptsize{$\pm$0.09}}  & 0.44\scriptsize{$\pm$0.22} &  &  &  &  \textbf{0.68\scriptsize{$\pm$0.26}}\\
\midrule
$\textbf{\texttt{U-net}}$ &  0.64\scriptsize{$\pm$0.24} &2.27\scriptsize{$\pm$0.82} & 3.59\scriptsize{$\pm$1.04} & 2.78\scriptsize{$\pm$0.83} & 1.65\scriptsize{$\pm$0.17} & 0.50\scriptsize{$\pm$0.04} & 0.34\scriptsize{$\pm$0.10} & 0.55\scriptsize{$\pm$0.05} & 0.91\scriptsize{$\pm$0.27}  & 4.25\scriptsize{$\pm$0.57}\\
$\textbf{\texttt{Augm}}$  &  &  0.75\scriptsize{$\pm$0.28} & 1.33\scriptsize{$\pm$0.33} & 0.86\scriptsize{$\pm$0.04} & 1.11\scriptsize{$\pm$0.07}  & & 0.96\scriptsize{$\pm$0.23} & 0.44\scriptsize{$\pm$0.21} & 1.24\scriptsize{$\pm$0.04} & 1.47\scriptsize{$\pm$0.11}  \\
$\texttt{Equ$_{\text{UM}}$}$  & 0.68\scriptsize{$\pm$0.26} & \textbf{0.71\scriptsize{$\pm$0.24}} &  &  &  &0.23\scriptsize{$\pm$0.06} & \textbf{0.14\scriptsize{$\pm$0.05}}  &  &  &\\
$\texttt{Equ$_{\text{Mag}}$}$  & 0.67\scriptsize{$\pm$0.11} &  & \textbf{0.68\scriptsize{$\pm$0.14}} &  &  & 0.42\scriptsize{$\pm$0.04} &  &\textbf{0.34\scriptsize{$\pm$0.06}}   &  &\\
$\texttt{Equ$_{\text{Rot}}$}$ & 0.68\scriptsize{$\pm$0.25}  &  & &  \textbf{0.74\scriptsize{$\pm$0.01}} &  & \textbf{0.11\scriptsize{$\pm$0.02}} &  &  &  \textbf{1.16\scriptsize{$\pm$0.05}} &\\
$\texttt{Equ$_{\text{Scal}}$}$  & 0.69\scriptsize{$\pm$0.13} &  &  &  &  \textbf{0.90\scriptsize{$\pm$0.25}} & 0.45\scriptsize{$\pm$0.32}  &  &  & & \textbf{0.89\scriptsize{$\pm$0.29}}\\
\bottomrule
\end{tabular}
\end{center}
\end{table*}

\textbf{Prediction Performance.}\ Table \ref{maintable} shows the prediction RMSE and ESE on the original and four transformed test sets by the non-equivariant \texttt{ResNet(Unet)} and four \texttt{Equ-ResNets(Unets)}. \texttt{Augm} is \texttt{ResNet(Unet)} trained on the augmented training set with additional samples with random transformations applied from the relevant symmetry group. The augmented training set contains additional transformed samples and is three times the size of the original training set. Each column contains the prediction errors by the non-equivariant and equivariant models on each test set. On the original test set, all models have similar RMSE, yet the equivariant models have lower ESE. This demonstrates that incorporating symmetries preserves the representation powers of CNNs and even improves models' physical consistency. 

\vspace{-3pt}
\begin{figure*}[htb!]
  \begin{minipage}[b]{0.245\textwidth}
   \includegraphics[width=\textwidth]{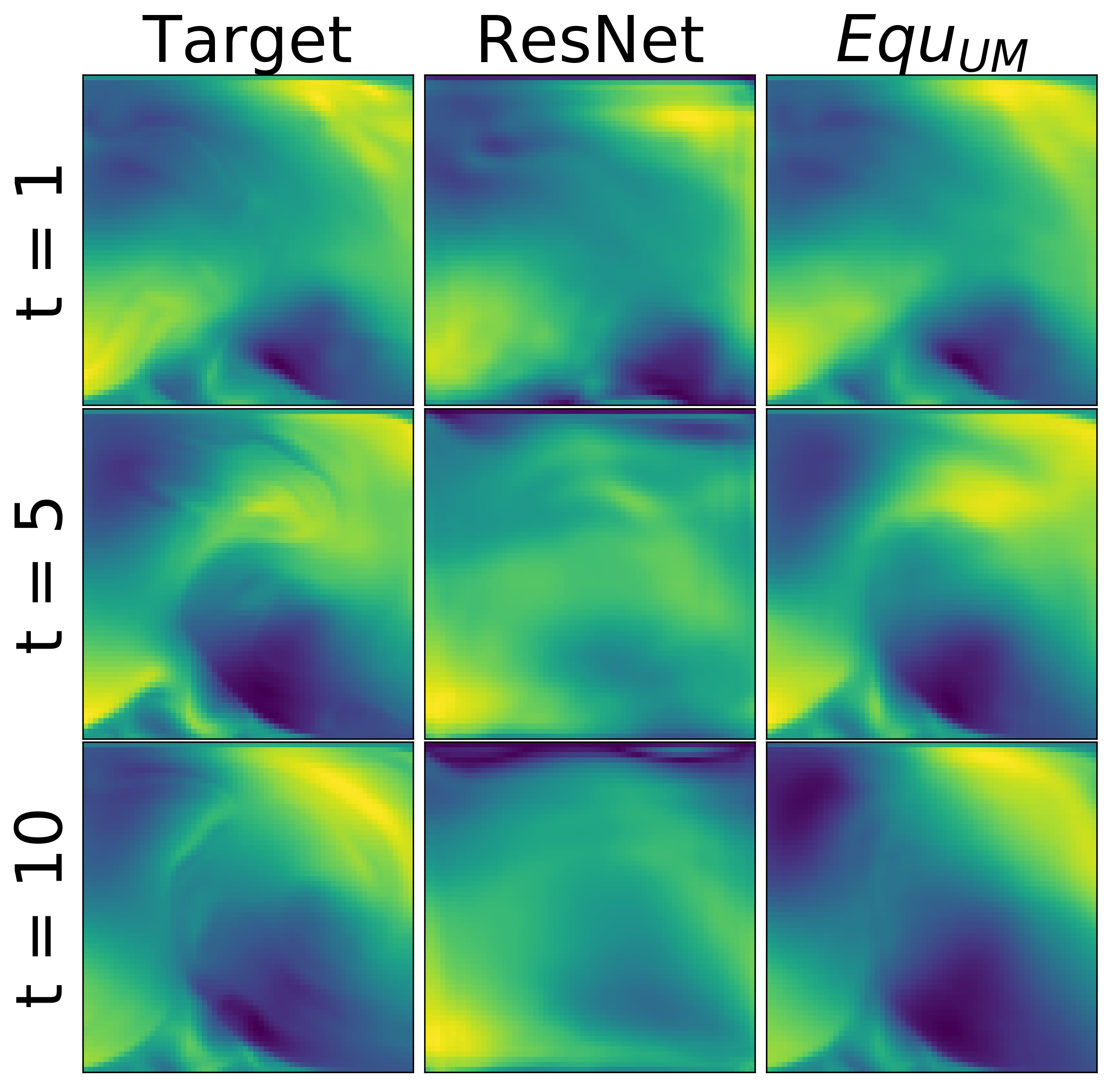}
  \end{minipage} \hfill
  \begin{minipage}[b]{0.245\textwidth}
     \includegraphics[width=\textwidth]{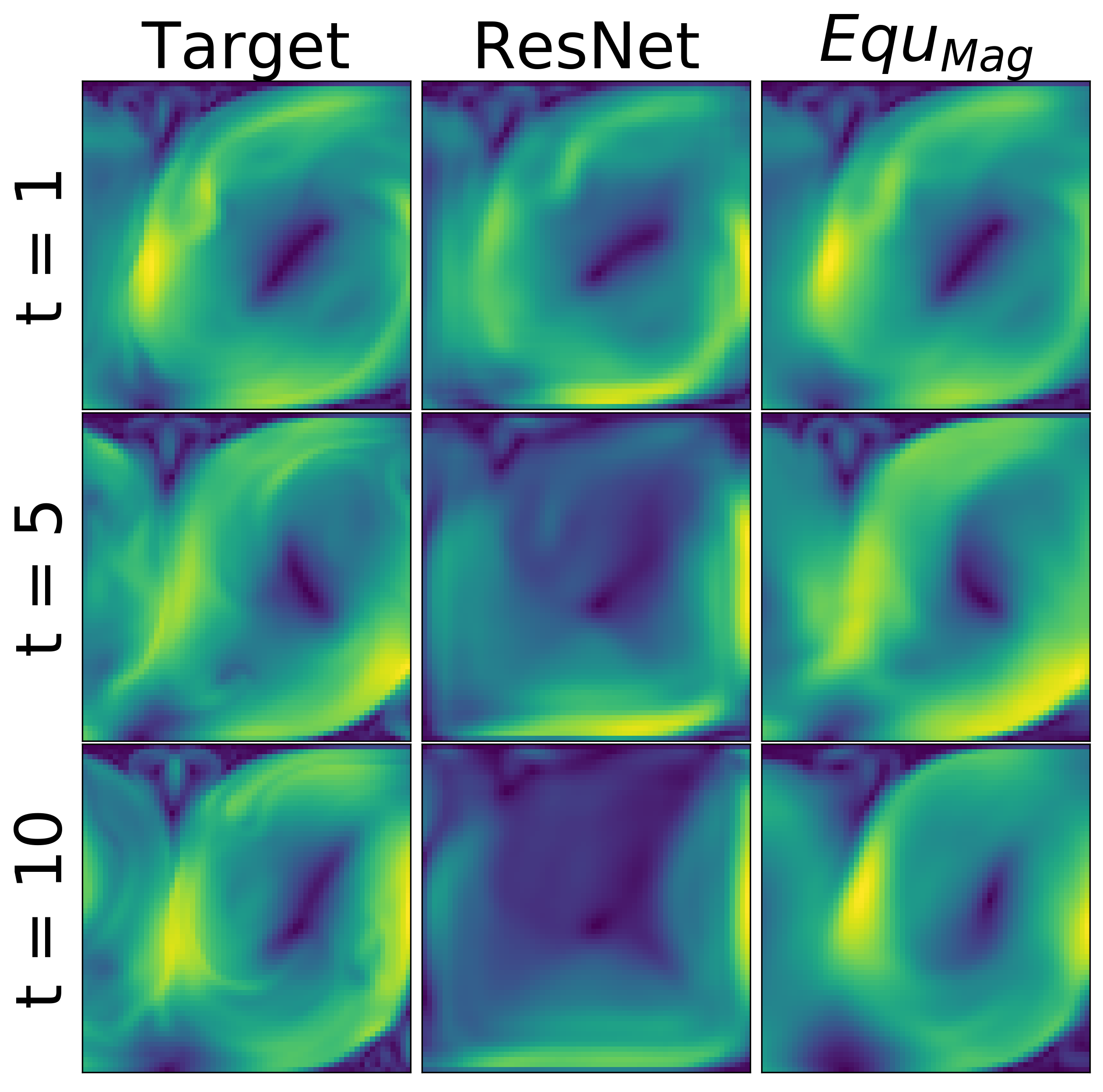}
  \end{minipage} \hfill
  \begin{minipage}[b]{0.245\textwidth}
\includegraphics[width=\textwidth]{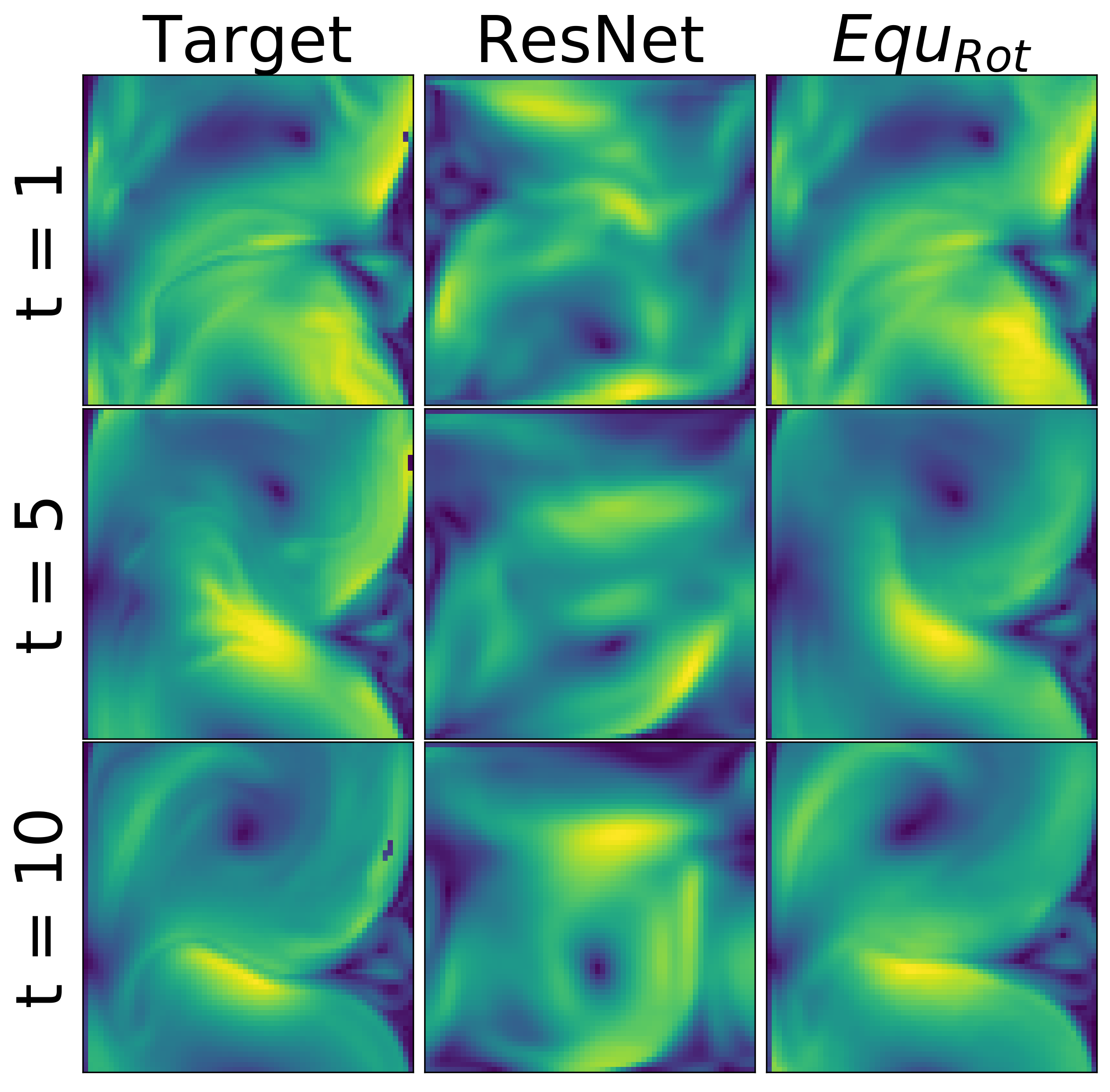}
  \end{minipage} \hfill
  \begin{minipage}[b]{0.245\textwidth}
\includegraphics[width=\textwidth]{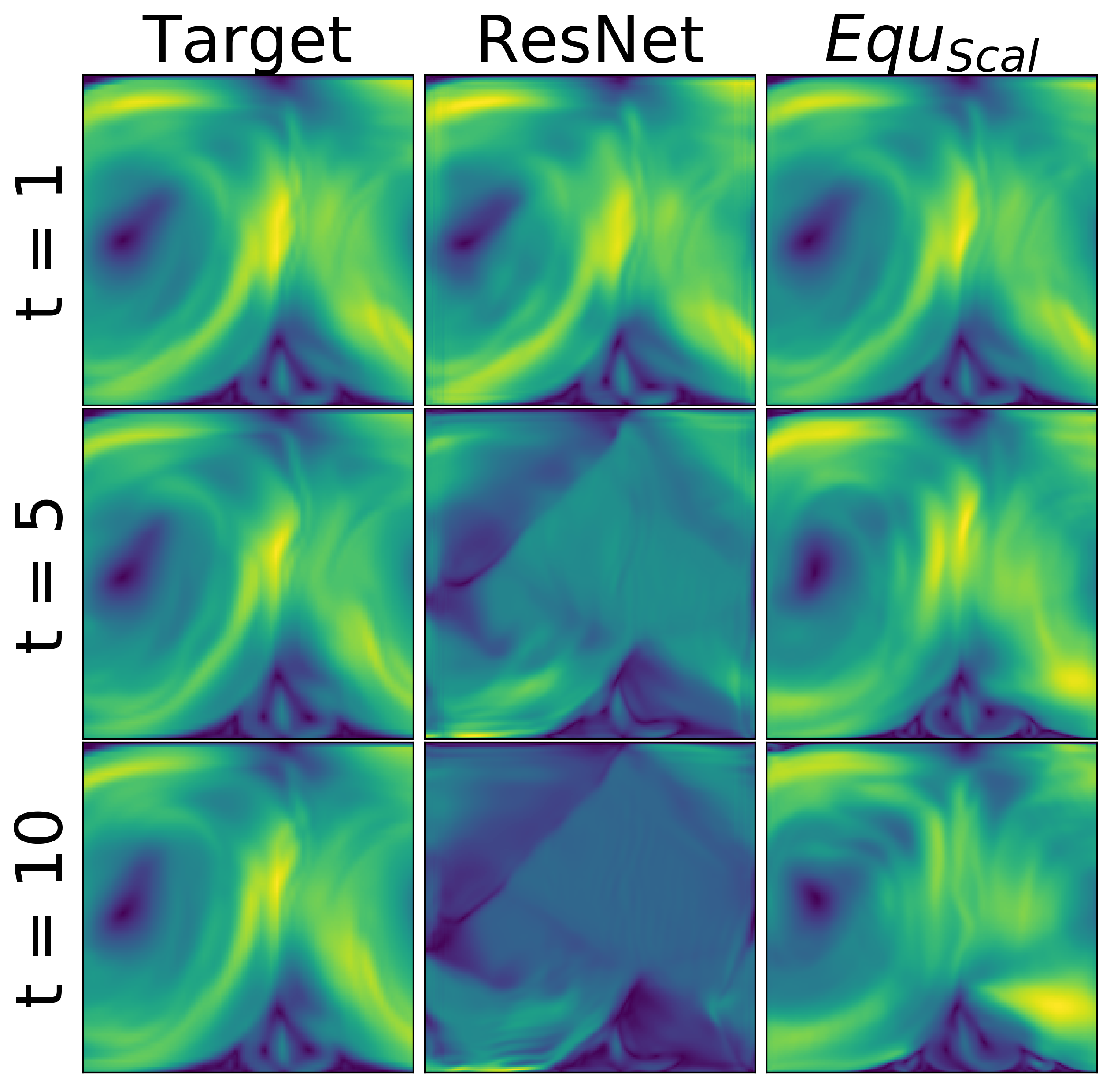}
  \end{minipage} 
\caption{The ground truth and the predicted velocity norm fields $\|\bm{w}\|_2$  at time step $1$, $5$ and $10$ by the \texttt{ResNet} and four \texttt{Equ-ResNets} on the four transformed test samples.  The first column is the target, the second is \texttt{ResNet} predictions, and the third is predictions by \texttt{Equ-ResNets}.}
\label{vel_u}
\vspace{-9pt}
\end{figure*}

On the transformed test sets, we can see that \texttt{ResNet(Unet)} fails, while \texttt{Equ-ResNets(Unets)} performs even much better than \texttt{Augm-ResNets(Unets)}. This demonstrates the value of equivariant models over data augmentation for improving generalization. Figure \ref{vel_u} shows the ground truth and the predicted velocity fields at time step $1$, $5$ and $10$  by the \texttt{ResNet} and four \texttt{Equ-ResNets} on the four transformed test samples.

\begin{wraptable}{r}{0.3\linewidth}
\vspace{-9pt}
\scriptsize
\caption{Performance comparison on transformed  train and test sets.}
\label{rbc_transformed}
\begin{tabular}{p{0.8cm}p{1cm}p{1cm}}\toprule
& \multicolumn{1}{c}{$\textrm{RMSE}$} &   \multicolumn{1}{c}{$\textrm{ESE}$} \\
\midrule
$\textbf{\texttt{ResNet}}$  & 1.03$\pm$0.05 &  0.96$\pm$0.10 \\
$\texttt{Equ$_{\text{UM}}$}$  & \textbf{0.69$\pm$0.01} & \textbf{0.35$\pm$0.13} \\
\midrule
$\textbf{\texttt{ResNet}}$ & 1.50$\pm$0.02 & 0.55$\pm$0.11\\
$\texttt{Equ$_{\text{Mag}}$}$  & \textbf{0.75$\pm$0.04} & \textbf{0.39$\pm$0.02} \\
\midrule
$\textbf{\texttt{ResNet}}$  & 1.18$\pm$0.05 & 1.21$\pm$0.04 \\
$\texttt{Equ$_{\text{Rot}}$}$ &  \textbf{0.77$\pm$0.01} & \textbf{0.68$\pm$0.01}\\
\midrule
$\textbf{\texttt{ResNet}}$  & 0.92$\pm$0.01 & 1.34$\pm$0.07 \\
$\texttt{Equ$_{\text{Scal}}$}$ & \textbf{0.74$\pm$0.03} & \textbf{1.02$\pm$0.02}\\
\bottomrule
\end{tabular}
\end{wraptable}

\paragraph{Generalization.} In order to evaluate models' generalization ability with respect to the extent of distributional shift, we created additional test sets with different scale factors from $\frac{1}{5}$ to 1. Figure \ref{ocean_resnet} shows \texttt{ResNet} and \texttt{Equ$_{\text{Scal}}$-ResNet} prediction RMSEs (left) and ESEs (right) on the test sets upscaled by different factors. We observed that \texttt{Equ$_{\text{Scal}}$-ResNet} is very robust across various scaling factors while \texttt{ResNet} does not generalize. 

We also compare \texttt{ResNet} and \texttt{Equ-ResNet} when both train and test sets have random transformations from the relevant symmetry group applied to each sample. This mimics real-world data in which each sample has unknown reference frame. As shown in Table \ref{rbc_transformed} shows \texttt{Equ-ResNet} outperforms \texttt{ResNet} on average by 34\% RMSE and 40\% ESE. 
\begin{figure*}[ht!]
\vspace{-10pt}
\centering
\begin{minipage}[b]{0.248\textwidth}
\includegraphics[width=\textwidth]{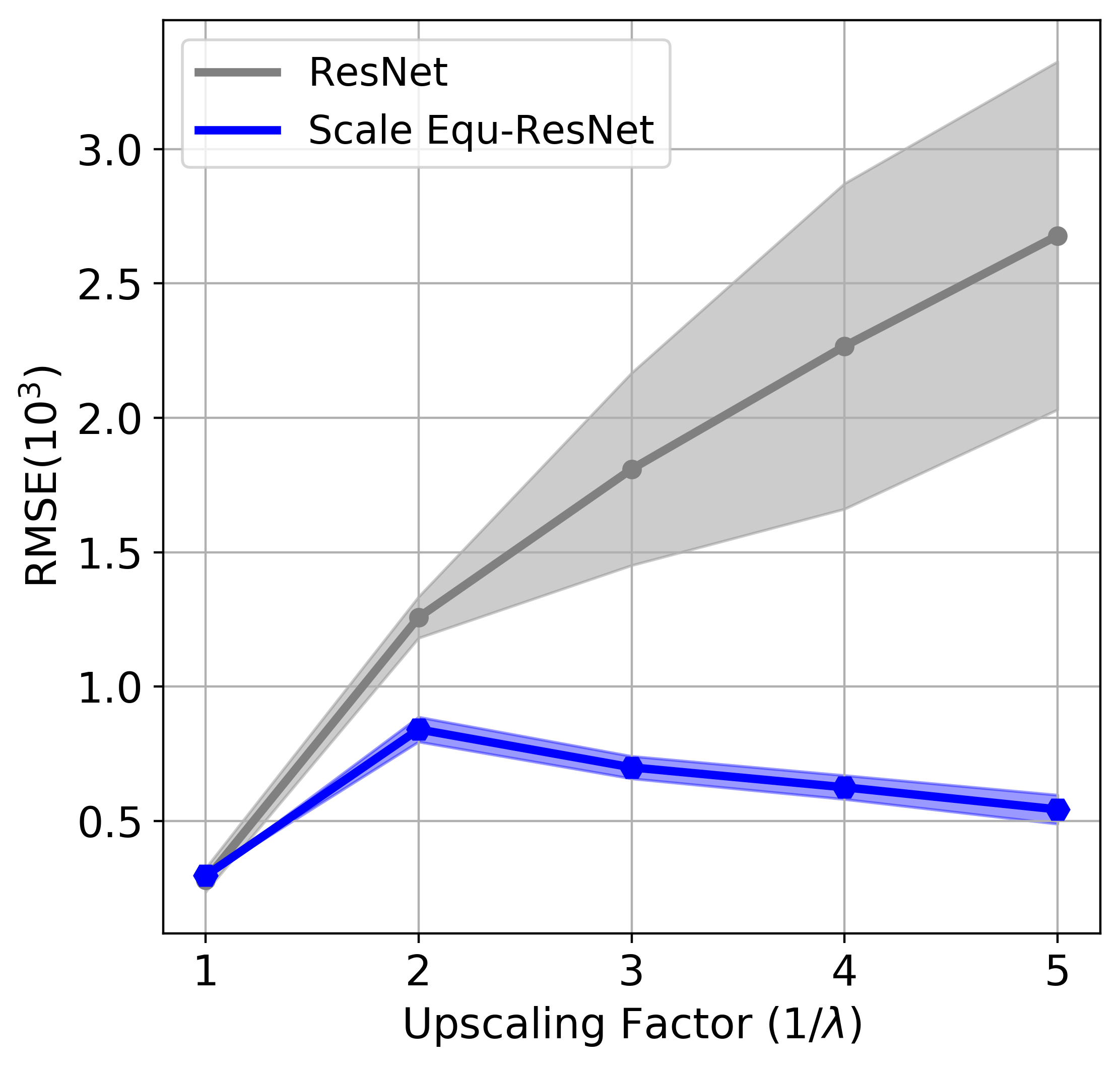}
\vspace{-15pt}
\end{minipage} 
\begin{minipage}[b]{0.24\textwidth}
\includegraphics[width=\textwidth]{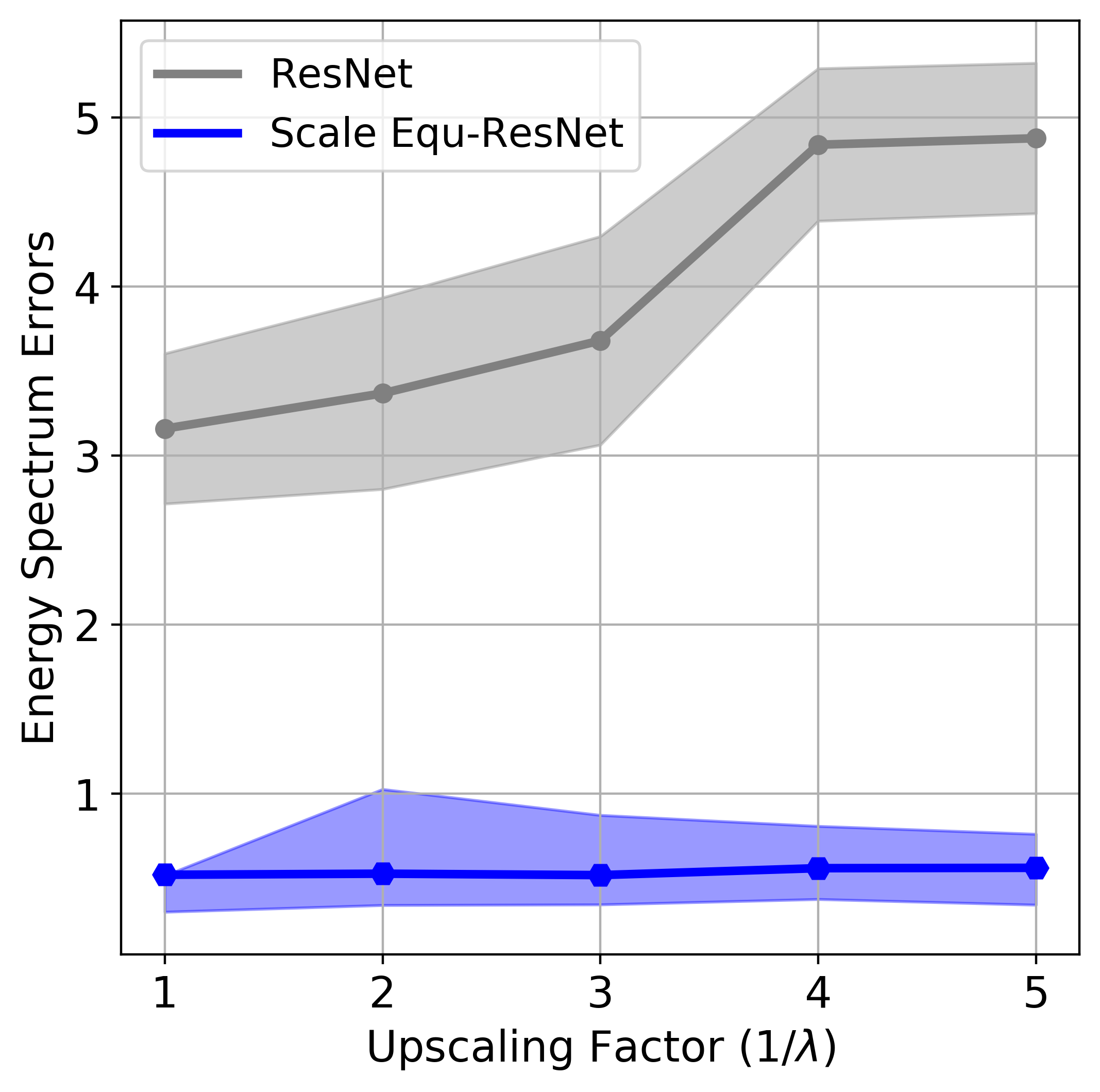}
\vspace{-15pt}
\end{minipage} \hfill
\begin{minipage}[t]{0.48\textwidth}
\includegraphics[width=\textwidth]{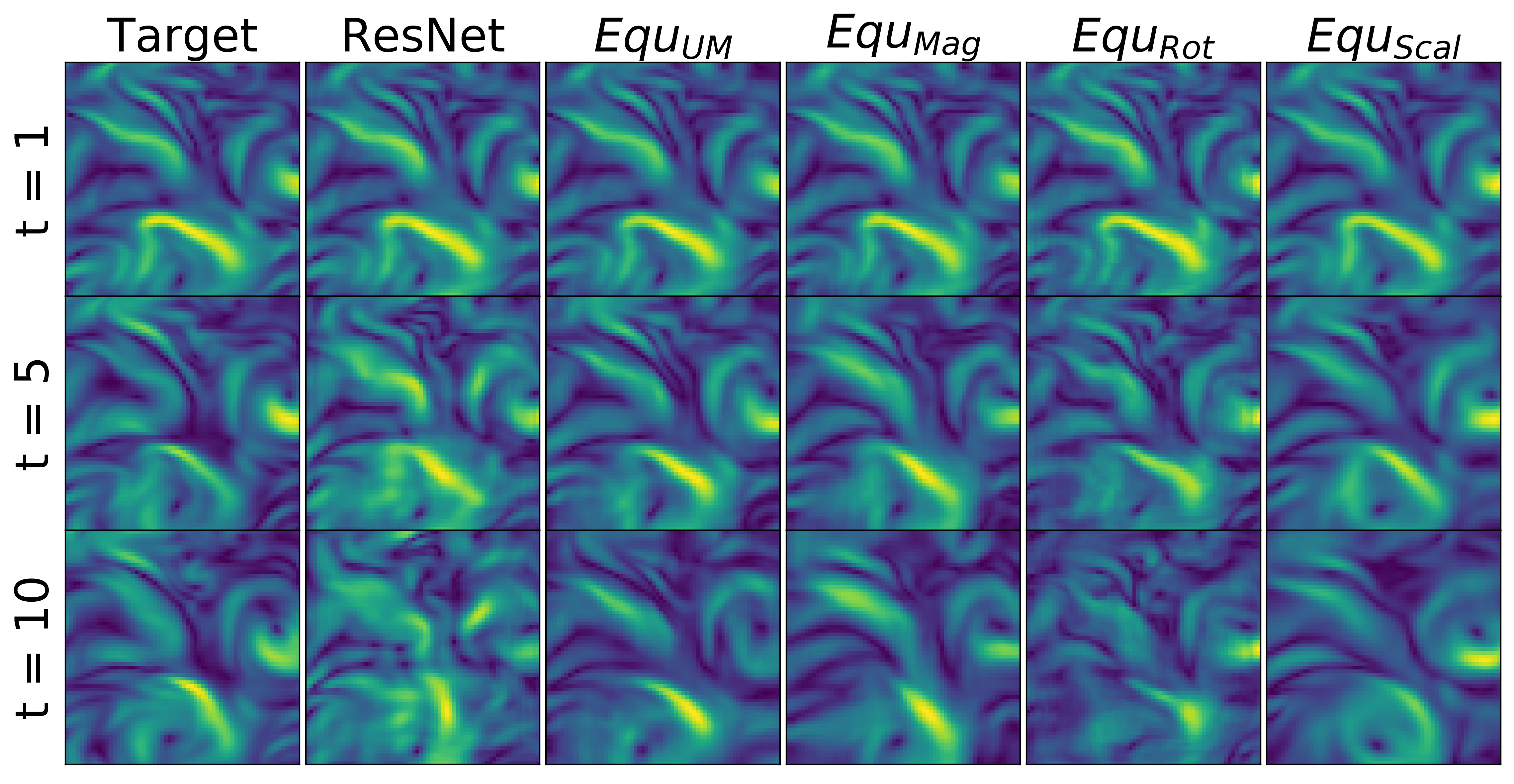}
\end{minipage}
\caption{Left: Prediction RMSE and ESE over five runs of \texttt{ResNet} and \texttt{Equ$_{\texttt{Scal}}$-ResNet} on the Rayleigh-B\'enard Convection test set upscaled by different factors. Right: The ground truth and predicted ocean currents $\|\bm{w}\|_2$  by \texttt{ResNet} and four \texttt{Equ-ResNets} on the test set of future time. }
\label{ocean_resnet}
\vspace{-3mm}
\end{figure*}

\subsection{Experiments on Real World Ocean Dynamics}
\paragraph{Data Description.} 
We use the reanalysis ocean current velocity data generated by the NEMO ocean engine \cite{madec2015nemo}.\footnote{The data are available at \url{https://resources.marine.copernicus.eu/?option=com_csw&view=details&product_id=GLOBAL_ANALYSIS_FORECAST_PHY_001_024}}
We selected an area from each of the Atlantic, Indian and North Pacific Oceans from 01/01/2016 to 08/18/2017 and extracted 64$\times$64 sub-regions for our experiments. The corresponding latitude and longitude ranges for the selected regions are (-44$\sim$-23, 25$\sim$46),  (55$\sim$76, -39$\sim$-18) and (-174$\sim$-153, 5$\sim$26) respectively. We not only test all models on the future data but also on a different domain (-180$\sim$-159, -40$\sim$-59) in South Pacific Ocean from 01/01/2016 to 12/15/2016. 

\paragraph{Prediction Performance.} Table \ref{ocean_velocity} shows the RMSE and ESE  of \texttt{ResNets(Unets)}, and equivariant \texttt{Equ-ResNets(Unets)} on the test sets with different  time range and spatial domain from the training set. All the equivariant models outperform the non-equivariant baseline  on RMSE, and \texttt{Equ$_{\text{Scal}}$-ResNet} achieves the lowest RMSE. For ESE, only the \texttt{Equ$_{\text{Mag}}$-ResNet(Unet)} is worse than the baseline. Also, it is remarkable that the \texttt{Equ$_{\text{Rot}}$} models have significantly lower ESE than others, suggesting that  they correctly learn the statistical distribution of ocean currents.  

\paragraph{Comparison with Data Augmentation.} We also compare \texttt{Equ-ResNets(Unets)} \texttt{ResNets(Unets)} that are trained with data-augmentation (\texttt{Augm})  in Table \ref{ocean_velocity}. In all cases, equivariant models outperforms the baselines trained with data augmentation. We find that data augmentation sometimes improves slightly on RMSE but not as much as the equivariant models. And, in fact, ESE is uniformly worse for models trained with data augmentation than even the baselines. In contrast, the equivariant models have much better ESE than the baselines with or without augmentation. We believe data augmentation presents a trade-off in learning. Though the model may be less sensitive to the various transformations we consider, we need to train bigger models longer on many more samples. The models may not have enough capacity to learn the symmetry from the augmented data and the dynamics of the fluids at the same time. By comparison, equivariant architectures do not have this issue.
\begin{wraptable}{r}{0.43\linewidth}
\scriptsize
\caption{Prediction RMSE and ESE comparison on the two ocean currents test sets.}
\label{ocean_velocity}
\centering
\begin{tabular}{p{0.6cm}p{0.7cm}p{0.85cm}p{0.7cm}p{0.85cm}}\toprule
\multirow{2}{*}{}& \multicolumn{2}{c}{$\textrm{RMSE}$} &  \multicolumn{2}{c}{$\textrm{ESE}$} \\
\cmidrule(rl){2-3} \cmidrule(rl){4-5}
& $\emph{Test$_{\text{time}}$}$ & $\emph{Test$_{\text{domain}}$}$ & $\emph{Test$_{\text{time}}$}$ & $\emph{Test$_{\text{domain}}$}$\\ \midrule
$\textbf{\texttt{ResNet}}$  & 0.71\tiny{$\pm$0.07} & 0.72\tiny{$\pm$0.04} & 0.83\tiny{$\pm$0.06} & 0.75\tiny{$\pm$0.11} \\
$\texttt{Augm$_{\text{UM}}$}$  & 0.70\tiny{$\pm$0.01} & 0.70\tiny{$\pm$0.07} & 1.06\tiny{$\pm$0.06} & 1.06\tiny{$\pm$0.04} \\
$\texttt{Augm$_{\text{Mag}}$}$  & 0.76\tiny{$\pm$0.02} & 0.71\tiny{$\pm$0.01} & 1.08\tiny{$\pm$0.08} &  1.05\tiny{$\pm$0.8} \\
$\texttt{Augm$_{\text{Rot}}$}$  & 0.73\tiny{$\pm$0.01} & 0.69\tiny{$\pm$0.01} & 0.94\tiny{$\pm$0.01} &  0.86\tiny{$\pm$0.01}\\
$\texttt{Augm$_{\text{Scal}}$}$  & 0.97\tiny{$\pm$0.06} & 0.92\tiny{$\pm$0.04} & 0.85\tiny{$\pm$0.03} &  0.95\tiny{$\pm$0.11}\\
$\texttt{Equ$_{\text{UM}}$}$  & 0.68\tiny{$\pm$0.06} & 0.68\tiny{$\pm$0.16} & 0.75\tiny{$\pm$0.06} & 0.73\tiny{$\pm$0.08} \\
$\texttt{Equ$_{\text{Mag}}$}$  & 0.66\tiny{$\pm$0.14} & \textbf{0.68\tiny{$\pm$0.11}} & 0.84\tiny{$\pm$0.04} &  0.85\tiny{$\pm$0.14} \\
$\texttt{Equ$_{\text{Rot}}$}$  & 0.69\tiny{$\pm$0.01} & 0.70\tiny{$\pm$0.08} & \textbf{0.43\tiny{$\pm$0.15}} &  \textbf{0.28\tiny{$\pm$0.20}}\\
$\texttt{Equ$_{\text{Scal}}$}$  & \textbf{0.63\tiny{$\pm$0.02}} & 0.68\tiny{$\pm$0.21} & 0.44\tiny{$\pm$0.05} &  0.42\tiny{$\pm$0.12}\\
\midrule
$\textbf{\texttt{U-net}}$ &  0.70\tiny{$\pm$0.13} & 0.73\tiny{$\pm$0.10} & 0.77\tiny{$\pm$0.12} & 0.73\tiny{$\pm$0.07}\\
$\texttt{Augm$_{\text{UM}}$}$  & 0.68\tiny{$\pm$0.02} & 0.68\tiny{$\pm$0.01} & 0.85\tiny{$\pm$0.04} & 0.83\tiny{$\pm$0.04}\\
$\texttt{Augm$_{\text{Mag}}$}$  & 0.69\tiny{$\pm$0.02} & 0.67\tiny{$\pm$0.10} & 0.78\tiny{$\pm$0.03} & 0.86\tiny{$\pm$0.02} \\
$\texttt{Augm$_{\text{Rot}}$}$ & 0.79\tiny{$\pm$0.01}  & 0.70\tiny{$\pm$0.01} & 0.79\tiny{$\pm$0.01} &  0.78\tiny{$\pm$0.02}\\
$\texttt{Augm$_{\text{Scal}}$}$  & 0.71\tiny{$\pm$0.01} & 0.77\tiny{$\pm$0.02} & 0.84\tiny{$\pm$0.01} & 0.77\tiny{$\pm$0.02}\\
$\texttt{Equ$_{\text{UM}}$}$  & 0.66\tiny{$\pm$0.10} & 0.67\tiny{$\pm$0.03} & 0.73\tiny{$\pm$0.03} & 0.82\tiny{$\pm$0.13}\\
$\texttt{Equ$_{\text{Mag}}$}$  & \textbf{0.63\tiny{$\pm$0.08}} & \textbf{0.66\tiny{$\pm$0.09}} & 0.74\tiny{$\pm$0.05} & 0.79\tiny{$\pm$0.04} \\
$\texttt{Equ$_{\text{Rot}}$}$ & 0.68\tiny{$\pm$0.05}  & 0.69\tiny{$\pm$0.02} & \textbf{0.42\tiny{$\pm$0.02}} &  0.47\tiny{$\pm$0.07}\\
$\texttt{Equ$_{\text{Scal}}$}$  & 0.65\tiny{$\pm$0.09} & 0.69\tiny{$\pm$0.05} & 0.45\tiny{$\pm$0.13} & \textbf{0.43\tiny{$\pm$0.05}}\\
\bottomrule
\end{tabular}
\vspace{-15pt}
\end{wraptable}

Figure \ref{ocean_resnet} shows the ground truth and the predicted ocean currents 
at time step $1,5,10$ by different models. We can see that equivariant models' predictions are more accurate and contain more details than the baselines. Thus, incorporating symmetry into deep learning models can improve the prediction accuracy of ocean currents. The most recent work on this dataset is \citet{PDE-CDNN}, which combines a warping scheme and a \texttt{U-net} to predict temperature. Since our models can also be applied to advection-diffusion systems, we also investigated the task of ocean temperature field predictions. We observe that \texttt{Equ$_{\text{UM}}$-Unet} performs slightly better than \citet{PDE-CDNN}. For additional results, see Appendix \ref{app_figures}. 

\section{Conclusion and Future work}

We develop methods to improve the generalization of deep sequence models for learning physical dynamics. We incorporate various symmetries by designing  equivariant neural networks and demonstrate their superior performance on 2D time series  prediction both theoretically and experimentally. Our designs obtain improved physical consistency for predictions.  In the case of transformed test data, our models  generalize significantly better than their non-equivariant counterparts.  Importantly, all of our equivariant models can be combined and can be extended to 3D cases. The group $G$ also acts on the boundary conditions and external forces of a system $\mathcal{D}$.  If these are $G$-invariant, then the system $\mathcal{D}$ is strictly invariant as in Section \ref{subsec:symdiffeq}.  If not, one must consider a family of solutions $\cup_{g \in G} \mathrm{Sol}(g \mathcal{D})$ 
to retain equivariance.  To the best of our best knowledge, there does not exist a single model with equivariance to the full symmetry group of the Navier-Stokes equations. It is possible but non-trivial, and we continue to work on combining different equivariances. Future work also includes speeding up the the scale-equivariant models and incorporating other symmetries into DL models.




\subsubsection*{Acknowledgments}
This work was supported in part by Google Faculty Research Award, NSF Grant \#2037745,
and the U. S. Army Research Office under Grant W911NF-20-1-0334.
 The Titan Xp used for
this research was donated by the NVIDIA Corporation. This research used resources of the National Energy Research Scientific Computing Center, a DOE Office of Science User Facility supported by the Office of Science of the U.S. Department of Energy under Contract No. DE-AC02-05CH11231. We also thank Dragos Bogdan Chirila for providing the turbulent flow data.

\bibliography{iclr2021_conference}
\clearpage
\appendix
\section{Additional Background on Group Theory}\label{app:grp}

We give a brief overview of group theory and representation theory.  For a more complete introduction to the topic see \citet{lang}. We start with the definition of an abstract symmetry group.


\begin{definition}[group]
  A group of symmetries or simply \textit{group} is a set $G$ together with a binary operation $\circ \colon G \times G \to G$ called \textit{composition} satisfying three properties: 
\begin{enumerate}
\item (\textit{identity}) There is an element $1 \in G$ such that $1 \circ g = g \circ 1 = g$ for all $g \in G$,
\item (\textit{associativity}) $(g_1 \circ g_2) \circ g_3 = g_1 \circ ( g_2 \circ g_3)$  for all $g_1,g_2,g_3 \in G$,
\item (\textit{inverses}) if $g \in G$, then there is an element $g^{-1} \in G$ such that $g \circ g^{-1} = g^{-1} \circ g = 1$.
\end{enumerate}
 \end{definition}

\begin{definition}[Lie group]
A group $G$ is a \textit{Lie group} if it is also a smooth manifold over $\mathbb{R}$ and the composition and inversion maps are \textit{smooth}, i.e. infinitely differentiable.
\end{definition}

\begin{example}\  Let $G= GL_2(\mathbb{R})$ be the set of $2\times2$ invertible real matrices. The set is closed under inversion and matrix multiplication gives a well-defined composition.  This a 4-dimensional real Lie group. 
\end{example}

\begin{example}
Let $G = D_3 = \lbrace 1, r, r^2, s, rs , r^2s \rbrace$ where $r$ is rotation by $2\pi/3$ and $s$ is reflection over the $y$-axis. This is the group of symmetries of an equilateral triangle pointing along the $y$-axis, see Figure \ref{D3}.
\begin{figure}[hbt!]
\centering
\includegraphics[width= 0.6\textwidth, trim={0 20 0 0}]{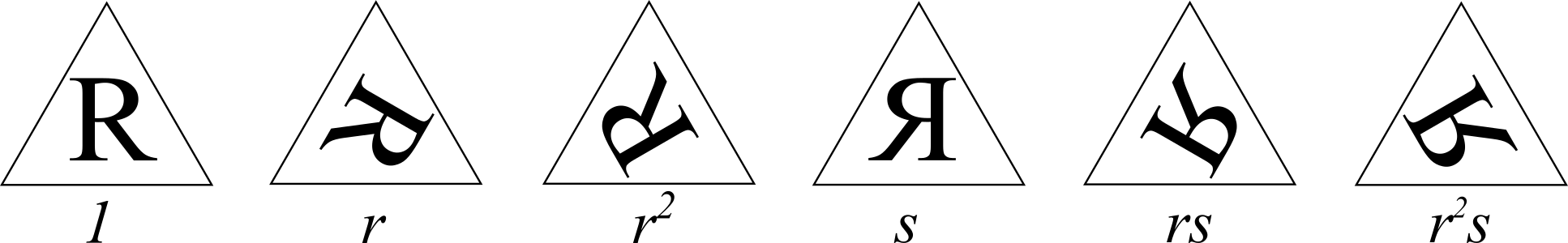}
\caption{Illustration of $D_3$ acting on a triangle with the letter ``R''.}
\end{figure}
\label{D3}
\end{example}
 
Groups are abstract objects, but they become concrete when we let them act.
 
\begin{definition}[action]
  A group $G$ acts on a set $S$ if there is an action map $\cdot \colon G \times S \to S$ satisfying \begin{enumerate}
    \item $1 \cdot x = x$ for all $x \in S$, $g \in G$,
    \item $g_1 \cdot (g_2 \cdot x) = (g_1 \circ g_2) \cdot x$ for all $x \in S$, $g_1,g_2 \in G$.
\end{enumerate}
\end{definition}

\begin{definition}[representation]
We say $S$ is a \textit{$G$-representation} if $S$ is an $\mathbb{R}$-vector space and $G$ acts on $S$ by linear transformations, that is, 
\begin{enumerate}
    \item $g \cdot (x+y) = g \cdot x + g \cdot y$ for all $x,y \in S$, $g \in G$,
    \item $g \cdot (c x) = c (g \cdot x)$ for all $x \in S$, $g \in G$, $c \in \mathbb{R}$.
\end{enumerate}
\end{definition}

\begin{example}
The group $D_3$ acts on $S$, the set of points in an equilateral triangle, as in Figure \ref{D3}.  The vector space $\mathbb{R}^2$ is both a $D_3$-representation and a $GL_2(\mathbb{R})$-representation.
\end{example}


The language of group theory allows us to formally define equivariance and invariance.

\begin{definition}[invariant, equivariant]  Let $f \colon X \to Y$ be a function and $G$ be a group.    
 \begin{enumerate}
     \item Assume $G$ acts on $X$. The function $f$ is \textit{$G$-invariant} if $f(gx) = x$ for all $x \in X$ and $g \in G$.
     \item Assume $G$ acts on $X$ and $Y$.  The function $f$ is \textit{$G$-equivariant} if $f(gx) = g f(x)$ for all $x \in X$ and $g \in G$. 
 \end{enumerate}
\end{definition}

See Figure \ref{fig:equi} for an illustration. 
Note that we often omit the different action maps of $G$ on $X$ and on $Y$ in our notion when they are clear from context.

We can combine and decompose representations in different ways. 

\begin{definition}[direct sum, tensor product]
Let $V$ and $W$ be $G$-representations.  
\begin{enumerate}
    \item The \textit{direct sum} $V \oplus W$ has underlying set $V \times W$.  As a vector space it has scalars $c(v,w) = (cv,cw)$ and addition $(v_1,w_1) + (v_2,w_2) = (v_1+v_2,w_1+w_2)$.  
    It is a $G$-representation with action $g \cdot (v,w) = (gv,gw)$.
    \item The \textit{tensor product} 
    \[
    V \otimes W = \left\lbrace \sum_i v_i \otimes w_i : v_i \in V, w_i \in W \right\rbrace
    \] is a $G$-representation with action $g \cdot v \otimes w = (gv) \otimes (gw)$.
\end{enumerate}
\end{definition}

\begin{definition}[irreducible]
Let $V$ be a $G$-representation.
\begin{enumerate}
    \item If $W$ is a subspace of $V$ and is closed under the action of $G$, i.e. $g w \in W$ for all $w \in W, g\in G$, then we say it is a \textit{subrepresentation}.
    \item If $0$ and $V$ itself are the only subrepresentations of $V$, then it is \textit{irreducible}.  
\end{enumerate}
\end{definition}

Irreducible representations are the ``prime'' building blocks of representations.  A \textbf{compact} Lie group is one which is closed and bounded. The rotation group $SO(2,\mathbb{R})$ is compact, but the group $(\mathbb{R},+)$ is not.  All finite groups are also compact Lie groups.
The following theorem vastly simplifies our understanding of possible representations of compact Lie groups (see e.g. \citet{knapp}).

\begin{theorem}[Weyl's Complete Reducibility Theorem] \label{thm:weyl}
Let $G$ be a compact real Lie group.  Every finite-dimensional representation of $V$ is a direct sum of irreducible representations $V = \oplus_i V_i$.
\end{theorem}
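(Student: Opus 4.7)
The plan is to use the standard averaging argument, whose cornerstone is the existence of a bi-invariant Haar measure on the compact Lie group $G$. Taking that for granted (it is a standard nontrivial result, see e.g.\ \citet{knapp}), the structure of the proof is essentially Maschke's theorem for compact groups: produce an invariant inner product by averaging, then show complementary subspaces to subrepresentations are themselves subrepresentations, and recurse.

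First, fix any inner product $\langle \cdot,\cdot\rangle_0$ on the finite-dimensional real vector space $V$. Let $\mu$ denote the normalized Haar measure on $G$, so $\mu(G)=1$, and define
\[
\langle v, w \rangle = \int_G \langle g v, g w \rangle_0 \, d\mu(g).
\]
I would verify briefly that this integral converges (the integrand is continuous and $G$ is compact), that it defines a symmetric positive-definite bilinear form (positive definiteness uses that $\langle g v, g v\rangle_0 > 0$ for $v \neq 0$ and the integrand is continuous and nonnegative), and that it is $G$-invariant, i.e.\ $\langle h v, h w \rangle = \langle v, w\rangle$ for all $h \in G$, which follows from the left-invariance of $\mu$ after the change of variables $g \mapsto g h$.

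The key step is then: for any subrepresentation $W \subseteq V$, the orthogonal complement $W^\perp$ with respect to $\langle \cdot, \cdot\rangle$ is also a subrepresentation. Indeed, for $v \in W^\perp$, $w \in W$, and $g \in G$, using $G$-invariance,
\[
\langle g v, w \rangle = \langle v, g^{-1} w \rangle = 0,
\]
since $g^{-1} w \in W$ by $G$-stability of $W$. Hence $g v \in W^\perp$, and $V = W \oplus W^\perp$ as $G$-representations.

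Finally, I would conclude by induction on $\dim V$. The base case $\dim V = 0$ (or $\dim V = 1$, automatically irreducible) is trivial. For the inductive step, either $V$ is already irreducible, in which case we are done, or $V$ admits a proper nonzero subrepresentation $W$; then $V = W \oplus W^\perp$ with $\dim W, \dim W^\perp < \dim V$, and the inductive hypothesis applied to each summand yields the desired decomposition of $V$ into irreducibles. The only nontrivial ingredient is the existence of Haar measure on a compact Lie group; everything else is elementary linear algebra. This is the main obstacle in the sense that it is the only part not following from first principles, but it is a well-established prerequisite and I would simply cite it.
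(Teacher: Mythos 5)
Your proof is correct, but note that the paper does not actually prove this theorem: it is stated as a known result with a pointer to \citet{knapp}, and is used downstream only as a black box (e.g.\ to justify decomposing hidden-layer representations of $\mathrm{SO}(2)$ or $C_n$ into irreducibles). What you have written is the standard ``unitarian trick'': average an arbitrary inner product against normalized Haar measure to obtain a $G$-invariant one, observe that the orthogonal complement of a subrepresentation is then a subrepresentation, and induct on dimension. All the steps check out --- positive definiteness of the averaged form, $G$-stability of $W^\perp$ via $\langle gv, w\rangle = \langle v, g^{-1}w\rangle$, and the induction are all sound, and you correctly isolate the existence of (bi-invariant) Haar measure as the one non-elementary ingredient, which is reasonable to cite. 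Two small points of hygiene: the change of variables $g \mapsto gh$ in the invariance computation uses \emph{right}-invariance of $\mu$ rather than left-invariance (harmless here since you invoked bi-invariance, which holds for compact groups), and you are implicitly using that the representation is continuous so that the integrand $g \mapsto \langle gv, gw\rangle_0$ is continuous --- worth stating, since the paper's definition of a representation does not impose continuity explicitly. Compared to simply citing the literature as the paper does, your argument is self-contained and makes visible exactly where compactness is used, which is pedagogically valuable; it buys nothing new mathematically but is a legitimate complete proof.
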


Thus to classify the possible finite-dimensional representations of $G$, one need only to find all possible irreducible representations of $G$.

\section{Additional Theory}
\subsection{Equivariant Networks and Data Augmentation}\label{app:data-aug}

A classic strategy for dealing with distributional shift by transformations in a group $G$ is to augment the training set $\mathcal{S}$ by adding samples transformed under $G$.  That is, using the new training set $\mathcal{S}' = \bigcup_{g \in G} g(S)$.  We show that data augmentation has no advantage for a perfectly equivariant parameterized function $f_\theta(x)$ since training samples $(x,y)$ and $(gx,gy)$ are equivalent.  That is, $f_\theta$ learns the same from $(x,y)$ as from $(gx,gy)$ but with only possibly different sample weight.  The following is a more formal statement of Proposition \ref{dataaug}.  

\begin{proposition}\label{prop-same-learn}
Let $G$ act on $X$ and $Y$.  Let $f_\theta \colon X \to Y$ be a parameterized class of $G$-equivariant functions differentiable with respect to $\theta$.
Let  $\mathcal{L} \colon Y \times Y \to \mathbb{R}$ be a $G$-equivariant loss function where $G$ acts on $\mathbb{R}$ by $\chi$, we have,
\[
\chi(g) \nabla_\theta \mathcal{L}(f_\theta(x),y) =  \nabla_\theta \mathcal{L}(f_\theta (gx),gy).
\]
\end{proposition}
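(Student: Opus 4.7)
The plan is to observe that, by successively applying equivariance of $f_\theta$ and then equivariance of $\mathcal{L}$, the quantity $\mathcal{L}(f_\theta(gx),gy)$ factors as $\chi(g)$ times $\mathcal{L}(f_\theta(x),y)$ with the scalar $\chi(g)$ independent of $\theta$. The gradient identity then follows immediately from linearity of $\nabla_\theta$.

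First I would apply $G$-equivariance of $f_\theta$ inside the first argument to rewrite $f_\theta(gx) = g\, f_\theta(x)$, so that
\[
\mathcal{L}(f_\theta(gx),gy) = \mathcal{L}(g\, f_\theta(x),\, gy).
\]
Next I would invoke $G$-equivariance of the loss $\mathcal{L} \colon Y \times Y \to \mathbb{R}$, with $G$ acting diagonally on $Y \times Y$ and through $\chi$ on $\mathbb{R}$, which by definition means $\mathcal{L}(gy_1, gy_2) = \chi(g)\,\mathcal{L}(y_1,y_2)$ for all $y_1,y_2 \in Y$. Specializing $y_1 = f_\theta(x)$ and $y_2 = y$ gives
\[
\mathcal{L}(g\, f_\theta(x),\, gy) = \chi(g)\, \mathcal{L}(f_\theta(x), y).
\]
Combining these two substitutions yields the pointwise identity $\mathcal{L}(f_\theta(gx),gy) = \chi(g)\, \mathcal{L}(f_\theta(x),y)$ on the parameter space.

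Finally, since the element $g \in G$ does not depend on $\theta$, neither does the scalar $\chi(g)$, and differentiating both sides of the identity in $\theta$ using linearity of $\nabla_\theta$ produces
\[
\nabla_\theta \mathcal{L}(f_\theta(gx),gy) = \chi(g)\, \nabla_\theta \mathcal{L}(f_\theta(x),y),
\]
which is exactly the claim. The interpretation is immediate: up to the global scalar $\chi(g)$, which only re-weights the sample, the gradient contributed by the augmented sample $(gx,gy)$ is identical to that contributed by $(x,y)$, so data augmentation by elements of $G$ provides no additional training signal to a $G$-equivariant model.

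There is really no serious obstacle; each step is a one-line substitution from the definitions. The only point of care is conceptual: the hypothesis ``$G$ acts on $\mathbb{R}$ by $\chi$'' should be read as a one-dimensional representation $\chi \colon G \to \mathbb{R}$, so that $\chi(g)$ is a genuine scalar and commutes with $\nabla_\theta$; without this, the final pull-out of $\chi(g)$ across the derivative would require extra justification. Given that reading, the proof is essentially a diagram-chase.
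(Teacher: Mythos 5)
Your proof is correct and matches the paper's argument in substance: both establish the pointwise identity $\mathcal{L}(f_\theta(gx),gy)=\chi(g)\,\mathcal{L}(f_\theta(x),y)$ by combining equivariance of $f_\theta$ with equivariance of $\mathcal{L}$ (the paper merely applies the two facts in the opposite order, writing $\mathcal{L}(f_\theta(gx),gy)=\chi(g)\mathcal{L}(g^{-1}f_\theta(gx),y)=\chi(g)\mathcal{L}(f_\theta(x),y)$), and then passes to gradients since $\chi(g)$ is a $\theta$-independent scalar. No gaps.
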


\begin{proof}
Equality of the gradients follows equality of the functions $\mathcal{L}(f_\theta(gx),gy) = \chi(g) \mathcal{L}(g^{-1} f_\theta(gx),y) = \chi(g) \mathcal{L}(f_\theta(x),y).$ 
\end{proof}

In the case of RMSE and rotation or uniform motion, the loss function is invariant.  That is, equivariant with $\chi(g) = 1$.  Thus the gradient for sample $(x,y)$ and $(gx,gy)$ is equal.  In the case of scale, the loss function is equivariant with $G = (\mathbb{R}_{>0},\cdot)$ and $\chi(\lambda) = \lambda$.  In that case, the sample $(gx,gy)$ is the same as the sample $(x,y)$ but with sample weight $\chi(g)$.

\subsection{Adding Skip Connections Preserves Equivariance}  \label{app:resnetequ}

We prove in general that adding skip connections to a network does not affect its equivariance with respect to linear actions in the following proposition \ref{prop:skipconn}. Define $f^{(ij)}$ as the functional mapping between layer $i$  and layer $j$.   

\begin{proposition}  \label{prop:skipconn}
Let the layer $V^{(i)}$ be a $G$-representations for $0 \leq i \leq n$.  Let $f^{(ij)} \colon V^{(i)} \to V^{(j)}$ be $G$-equivariant for $i<j$.  Define recursively $\bm{x}^{(j)} = \sum_{0 \leq i < j} f^{(ij)}(\bm{x}^{(i)})$.  Then $\bm{x}^{(n)} = f(\bm{x}^{(0)})$ is $G$-equivariant.
\end{proposition}
\begin{proof}
Assume $\bm{x}^{(i)}$ is an equivariant function of $\bm{x}^{(0)}$ for $i < j$.   Then by equivariance of $f^{(ij)}$ and by linearity of the $G$-action,
\[
 \sum_{0 \leq i < j} f^{(ij)}(g\bm{x}^{(i)}) =  \sum_{0 \leq i < j} g f^{(ij)}(\bm{x}^{(i)}) = g \bm{x}^{(j)},
\]
for $g \in G$.  By induction, $\bm{x}^{(n)} = f(\bm{x}^{(0)})$ is equivariant with respect to $G$. 
\end{proof}

Both \texttt{ResNet} and \texttt{U-net} may be modeled as in Proposition \ref{prop:skipconn} with some convolutional and activation components $f^{(i,i+1)}$ and some skip connections $f^{(ij)} = I$ with $j-i\geq 2$.  Since $I$ is equivariant for any $G$, we thus have:
\begin{corollary} \label{cor:resequiv}
If the layers of \texttt{ResNet} or \texttt{U-net} are $G$-representations and the convolutional mappings and activation functions are $G$-equivariant, then the entire network is $G$-equivariant. \qed
\end{corollary}

Corollary \ref{cor:resequiv} allows us to build equivariant convolutional networks for rotational and scaling transformations, which are linear actions.

\subsection{Results on Uniform Motion Equivariance} \label{app:uniform}

In this section, we prove that for the combined convolution-activation layers of a CNN to be uniform motion equivariant, the CNN must be an affine function.  We assume that the activation function is applied pointwise.  That is, the same activation function is applied to every one-dimensional channel independently.   


\begin{proposition}\label{prop:sumtoone}
Let $\bm{X}$ be a tensor of shape $h \times w \times c$ and $K$ be convolutional kernel of shape $k \times k \times c$.  Let $f(\bm{X}) = \bm{X} \ast K$ be a convolutional layer which is equivariant with respect to arbitrary uniform motion $\bm{X} \mapsto \bm{X} + \bm{C}$ for $\bm{C}$ a constant tensor of the same shape as $\bm{X}$.  That is $C_{ijk} = c$ for all $i,j,k$ for some fixed $c \in \mathbb{R}$. Then the sum of the weights of $K$ is 1.
\end{proposition}
\begin{proof}
Since $f$ is equivariant, $\bm{X} \ast K + \bm{C} = (\bm{X} + \bm{C}) \ast K$.  By linearity, $\bm{C} \ast K = \bm{C}$.  Then because $\bm{C}$ is a constant vector field, $\bm{C} \ast K = \bm{C} (\sum_v K(v))$. As $\bm{C}$ is arbitrary, $\sum_v K(v) = 1$.   
\end{proof}

For an activation function to be uniform motion equivariant, it must be a translation.

\begin{proposition}\label{prop:activation_uniform_motion}
Let $\sigma\colon\mathbb{R}\to\mathbb{R}$ be a function satisfying $\sigma(x+c) = \sigma(x) + c$. Then $\sigma$ is a translation.
\end{proposition}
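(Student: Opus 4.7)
The plan is to observe that the hypothesis $\sigma(x+c) = \sigma(x) + c$ is assumed to hold for \emph{all} $x, c \in \mathbb{R}$, not merely for a fixed $x$ or fixed $c$. This global assumption is extremely strong: it lets us recover the value of $\sigma$ everywhere from its value at a single point. In particular, specializing $x = 0$ will collapse the functional equation into an explicit formula.

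Concretely, I would proceed as follows. First, set $x = 0$ in the functional equation to obtain
\[
\sigma(c) = \sigma(0) + c \quad \text{for all } c \in \mathbb{R}.
\]
Then define the constant $b := \sigma(0) \in \mathbb{R}$. Substituting gives $\sigma(c) = c + b$ for every $c \in \mathbb{R}$, which is by definition a translation of $\mathbb{R}$ by $b$. Finally, one should verify that any such translation does satisfy the original equation, which is immediate: $(x + c) + b = (x + b) + c$.

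There is essentially no obstacle here; the only subtlety worth flagging is a clarification of what ``translation'' means in the statement. In the ambient context of Section \ref{umsection} the input channel is one-dimensional, so translations of $\mathbb{R}$ are exactly the maps $x \mapsto x + b$, matching what we derived. Thus the proposition reduces to a one-line specialization of the hypothesis, with the constant of translation being $b = \sigma(0)$. The corollary of interest is that ReLU, sigmoid, tanh and similar nonlinearities are \emph{not} uniform-motion equivariant when applied pointwise, which motivates the mean-subtraction trick introduced earlier in Section \ref{umsection}.
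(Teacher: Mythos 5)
Your proof is correct and takes essentially the same route as the paper's: both specialize the functional equation to express $\sigma$ at an arbitrary point in terms of $\sigma(0)$ (the paper sets $c=-x$, you set $x=0$; these are the same one-line argument with the roles of the variables swapped). Nothing is missing.
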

\begin{proof}
Let $a = \sigma(0)$. Then $\sigma(x) = \sigma(x+c) - c$. Choosing $c = -x$ gives $\sigma(x) = a + x.$
\end{proof}

\begin{proposition}\label{prop:combined_layers}
Let $\bm{X}$ and $K$ be as in Prop \ref{prop:sumtoone}. Let $f$ be a convolutional layer with kernel $K$ and $\sigma$ an activation function. Assume $\sigma \colon \mathbb{R} \to \mathbb{R}$ is piecewise differentiable.  Then if the composition $\varphi = \sigma \circ f$ is equivariant with respect to arbitrary uniform motions, it is an affine map of the form $\varphi(\bm{X}) = K'\ast \bm{X} + b,$ where $b$ is a real number and $\sum_v K'(v) = 1$. 
\end{proposition}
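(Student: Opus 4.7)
The plan is to reduce the uniform-motion equivariance of the composition $\varphi = \sigma \circ f$ to a single functional equation on $\sigma$, and then solve it directly. Since the activation is applied pointwise, uniform-motion equivariance means that for every scalar $c \in \mathbb{R}$ (treated as a constant field) and every input $\bm{X}$, we have $\varphi(\bm{X} + c)(p) = \varphi(\bm{X})(p) + c$ at every spatial location $p$. The key observation is that $(\bm{X} + c) \ast K = \bm{X} \ast K + cS$ where $S := \sum_v K(v)$, since a constant field convolves with $K$ to produce a constant field of value $cS$. Writing $y = (\bm{X}\ast K)(p)$, the equivariance condition collapses to
\[
\sigma(y + cS) = \sigma(y) + c \qquad \text{for all } y \text{ realized as such a convolution output and all } c \in \mathbb{R}.
\]

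Next I would dispose of the degenerate case $S = 0$: if the kernel weights summed to zero, the displayed identity would force $\sigma(y) = \sigma(y) + c$ for every $c$, which is impossible. Assuming $S \neq 0$, I would specialize to $\bm{X} = 0$ (hence $y = 0$) and change variables $t = cS$ to obtain $\sigma(t) = \sigma(0) + t/S$ for every $t \in \mathbb{R}$. Thus $\sigma$ is automatically affine. Notably the piecewise differentiability hypothesis is not essential for this step; one could alternatively differentiate the identity in $y$ at fixed $c$ and conclude $\sigma'$ is constant almost everywhere, but the direct route is simpler.

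Finally, substituting back, I would set $K' := K/S$ and $b := \sigma(0)$, so that
\[
\varphi(\bm{X})(p) = \sigma\bigl((\bm{X}\ast K)(p)\bigr) = \tfrac{1}{S}(\bm{X}\ast K)(p) + b = (\bm{X} \ast K')(p) + b,
\]
with $\sum_v K'(v) = S/S = 1$, matching the claimed form. The only genuine obstacle is the $S=0$ bookkeeping, since once that case is excluded the argument is a one-line resolution of a Cauchy-type identity; the rest is routine relabeling. In particular, the earlier proposition characterizing uniform-motion-equivariant convolutions (kernel weights summing to $1$) is recovered here as the special case $S=1$, corresponding to $\sigma$ being a translation.
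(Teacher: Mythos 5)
Your proof is correct, and it reaches the same core functional equation as the paper --- $\sigma(y + cS) = \sigma(y) + c$ with $S = \sum_v K(v)$ --- but resolves it by a genuinely more elementary route. The paper argues that because $f$ is nonzero one can make both $\beta = (f(x))_p$ and the increment $h = c\lambda$ arbitrary, forms the difference quotient $(\sigma(\beta+h)-\sigma(\beta))/h = 1/\lambda$, and concludes that $\sigma$ is everywhere differentiable with constant slope $1/\lambda$; this is where the piecewise-differentiability hypothesis is (nominally) spent. You instead fix a single realized base point ($y=0$, from the zero input) and observe that $c \mapsto \sigma(cS) = \sigma(0) + c$ already determines $\sigma$ on all of $\mathbb{R}$ once $S \neq 0$, so no regularity assumption on $\sigma$ is needed at all. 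Your version also buys two small improvements the paper leaves implicit: the explicit exclusion of the degenerate case $S = 0$ (where equivariance is outright impossible), and the observation that the hypothesis on $\sigma$ can be dropped. The final rescaling $K' = K/S$, $b = \sigma(0)$ is identical in both arguments. The one cosmetic gap is that you state the functional equation ``for all $y$ realized as a convolution output'' but then use only $y=0$; it would be worth one sentence noting that the zero field is an admissible input (or that any single realized $y_0$ suffices, giving $\sigma(t) = \sigma(y_0) + (t-y_0)/S$), after which nothing is missing.
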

\begin{proof}
If $f$ is non-zero, then we can choose a tensor $X$, and constant tensor $C$ full of $c \in \mathbb{R}$, and $p \in \mathbb{Z}^2$ such that $c$ and $\beta=(f(X))_p$ are any two real numbers.  Let $\lambda =\sum_v K(v)$. As before $f(C) = \lambda C$.  Equivariance thus implies
\[
\sigma(\beta + c \lambda) = \sigma(\beta) + c.
\]
Note $\lambda \not = 0$, since if $\lambda = 0$, then $\sigma(\beta) = \sigma(\beta) + c$ implies $c = 0$.  However $c$ is arbitrary. 
Let $h = c \lambda$.  Then 
\[
\frac{\sigma(\beta+h) - \sigma(\beta)}{h} = \frac{1}{\lambda}.
\]
This holds for arbitrary $\beta$ and $h$, and thus we find $\sigma$ is everywhere differentiable with slope $\lambda^{-1}$.  So $\sigma(x) = x/\lambda + b$ for some $b \in \mathbb{R}$.  We can then rescale the convolution kernel $K' = K/\lambda$ to get $\varphi(\bm{X}) = K' \ast \bm{X} + b$.
\end{proof}

\begin{corollary}[Corollary \ref{cor:umaffpaper}]\label{cor:umaff}
If $f$ is a CNN alternating between convolutions $f_i$ and pointwise activations $\sigma_i$ and the combined layers $\sigma_i \circ f_i$ are uniform motion equivariant, then $f$ is affine.
\end{corollary}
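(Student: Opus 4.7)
The plan is an immediate inductive reduction to Proposition \ref{prop:combined_layers}. By hypothesis, every combined layer $\varphi_i := \sigma_i \circ f_i$ is uniform-motion equivariant and each $\sigma_i$ is a pointwise activation, so I carry as a standing mild regularity assumption the piecewise differentiability already invoked in Proposition \ref{prop:combined_layers}. Applying that proposition to each $\varphi_i$ produces a kernel $K'_i$ and bias $b_i$ with $\varphi_i(\bm{X}) = K'_i \ast \bm{X} + b_i$, i.e. each combined layer is affine.

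The CNN is $f = \varphi_n \circ \varphi_{n-1} \circ \cdots \circ \varphi_1$, and compositions of affine maps are affine. Concretely,
\[
\varphi_{i+1}(\varphi_i(\bm{X})) = K'_{i+1} \ast (K'_i \ast \bm{X} + b_i) + b_{i+1} = (K'_{i+1} \ast K'_i) \ast \bm{X} + (K'_{i+1} \ast b_i + b_{i+1}),
\]
which is again a convolution plus a bias. A straightforward induction on the number of layers yields $f(\bm{X}) = K^{\mathrm{tot}} \ast \bm{X} + b^{\mathrm{tot}}$ with $K^{\mathrm{tot}} = K'_n \ast \cdots \ast K'_1$, establishing that $f$ is affine.

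The one place that warrants care is the multi-channel setting, since Proposition \ref{prop:combined_layers} is phrased for a scalar-channel convolution-activation pair. Because activations are pointwise they act channel-wise, and the output in channel $k$ of a convolutional layer is a sum of single-channel convolutions of the input channels. Uniform-motion equivariance of the combined layer thus restricts to equivariance on each scalar output channel as a function of the scalar input channels, which is exactly the hypothesis of Proposition \ref{prop:combined_layers}. Once this channel-wise reduction is verified, the inductive composition argument above goes through unchanged and the corollary follows.
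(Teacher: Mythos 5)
Your proof is correct and follows essentially the same route as the paper's: apply Proposition \ref{prop:combined_layers} to conclude each combined layer $\sigma_i \circ f_i$ is affine, then use that compositions of affine maps are affine (the paper's one-line proof cites Proposition \ref{prop:activation_uniform_motion}, but the substance is the same chain of lemmas you invoke). Your explicit attention to the multi-channel reduction is extra care the paper itself does not spell out, but it does not change the argument.
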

\begin{proof}
This follows from Proposition \ref{prop:activation_uniform_motion} and the fact that composition of affine functions is affine.
\end{proof}

Since our treatment is only for pointwise activation functions, it remains a possibility that more descriptive networks can be constructed using activation functions which span multiple channels.



\begin{proposition}[Proposition \ref{um_residual_paper}]\label{um_residual}
A residual block $f(\bm{x}) + \bm{x}$ is uniform motion equivariant if the residual connection $f$ is uniform motion invariant. 
\end{proposition}
\begin{proof}
We denote the uniform motion transformation by $\bm{c}$ by $T_{\bm{c}}^{\mathrm{um}}(\bm{w}) = \bm{w} + \bm{c}$.  Let $f$ be an invariant residual connection which is a composition of convolution layers and activation functions. Then we compute 
\begin{align*}
    f(T_{\bm{c}}^{\mathrm{um}}(\bm{w})) + T_{\bm{c}}^{\mathrm{um}}(\bm{w}) &= f(\bm{w}) + \bm{w} + \bm{c} \\
                             &= (f(\bm{w}) + \bm{w}) + \bm{c} \\
                             &= T_{\bm{c}}^{\mathrm{um}}(f(\bm{w}) + \bm{w}).
\end{align*}
as desired.
\end{proof}

\subsection{Results on Scale Equivariance}\label{app:scale}

We show that a scale-invariant CNN in the sense of \eqref{rot_sym} would be extremely limited.  Let $G = (\mathbb{R}_{>0},\cdot)$ be the rescaling group.  It is isomorphic to $(\mathbb{R},+)$. For $c$ a real number, $\rho_{c}(\lambda) = \lambda^c$ gives an action of $G$ on $\mathbb{R}$.  There is also, e.g., a two-dimensional representation 
\[
\rho(\lambda) = \left( \begin{array}{cc}
    1 & \log(\lambda) \\
    0 & 1 
\end{array} \right).
\]

\begin{proposition}\label{prop:scale} Let $K$ be a $G$-equivariant kernel for a convolutional layer.  Assume $G$ acts on the input layer by $\rho_{in}$ and output layer by $\rho_{out}$.  Assume that the input layer is padded with 0s.  Then $K$ is 1x1.  
\end{proposition}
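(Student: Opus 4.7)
The plan is to exploit the unboundedness of the scaling group $G=(\mathbb{R}_{>0},\cdot)$ together with the bounded spatial support of any convolutional kernel, showing the kernel constraint forces $K$ to be concentrated at the origin.

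First, I would specialize the kernel constraint \eqref{rot_sym} to the scaling action on positions. For $\lambda \in \mathbb{R}_{>0}$ acting on $v \in \mathbb{R}^2$ by $\lambda \cdot v = \lambda v$, equivariance reads
\[
K(\lambda v) = \rho_{\mathrm{out}}^{-1}(\lambda)\, K(v)\, \rho_{\mathrm{in}}(\lambda) \qquad \text{for all } \lambda > 0,\ v \in \mathbb{R}^2.
\]
Since $\rho_{\mathrm{in}}(\lambda)$ and $\rho_{\mathrm{out}}(\lambda)$ are invertible linear maps (they come from a group representation), the matrix $K(\lambda v)$ vanishes if and only if $K(v)$ vanishes. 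Hence the support $\mathrm{supp}(K) \subseteq \mathbb{R}^2$ is closed under the scaling action, i.e. it is a union of rays emanating from the origin (together possibly with $\{0\}$ itself).

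Second, I would combine this with the fact that a convolutional kernel has bounded spatial support: for any standard $s \times s$ kernel, $\mathrm{supp}(K)$ is contained in some ball of finite radius. If there were a point $v_0 \neq 0$ with $K(v_0) \neq 0$, then the entire ray $\{\lambda v_0 : \lambda > 0\}$ would lie in $\mathrm{supp}(K)$, which is unbounded and contradicts bounded support. Therefore $K(v) = 0$ for every $v \neq 0$, meaning $K$ is a $1\times 1$ kernel. The zero-padding hypothesis is what makes the ``scaling up'' legitimate: it guarantees that when the kernel is translated to positions near the boundary and the input is rescaled, the relation $K(\lambda v)$ actually must vanish (rather than being undefined), which is what lets us deduce the kernel constraint on an unbounded set of positions $\lambda v_0$ for $\lambda$ arbitrarily large.

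The main obstacle is really conceptual rather than computational: one must be careful that the kernel constraint \eqref{rot_sym} genuinely propagates to arbitrarily large $\lambda$. This is why the zero-padding assumption is essential; without it, one could try to evade the contradiction by claiming the kernel is only defined on a finite window and the constraint need not hold outside. With zero padding, both sides of the equivariance equation extend consistently to all of $\mathbb{R}^2$, and the ray argument goes through.
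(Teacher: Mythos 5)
Your proof is correct and uses essentially the same idea as the paper: the paper picks, for each $v \neq 0$, a $\lambda$ large enough that $\lambda v$ lies outside the kernel's finite support (hence $K(\lambda v)=0$ by zero padding) and then propagates the zero back through the invertible maps $\rho_{\mathrm{in}}(\lambda),\rho_{\mathrm{out}}(\lambda)$ to conclude $K(v)=0$. Your contrapositive phrasing via scale-invariance of $\mathrm{supp}(K)$ and unbounded rays is just a repackaging of that same argument.
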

\begin{proof}
If $v \not =0$ then there exists $\lambda \in \mathbb{R}_{>0}$ such that $\lambda v$ is outside the radius of the kernel.  So $K(\lambda v) = 0$. Thus by equivariance, for some $n$, 
\begin{align*}
    K(v) = \mathbf{\lambda^n}\rho_{\mathrm{out}}^{-1} K(\lambda v) \rho_{\mathrm{in}} = 0 .
\end{align*}
\end{proof}

\subsection{Equivariance Error.}\label{EquiError}
In practice it is difficult to implement a model which is perfectly equivariant. This results in equivariance error $\mathrm{EE}_T(x) = |T(f(x)) - f(T(x))|.$  Given an input $x$ with true output $\hat{y}$ and transformed data $T(x)$, the transformed test error $\mathrm{TTE} = |T(\hat{y}) - f(T(x))|$ can be bounded using the untransformed test error $\mathrm{TE} = |\hat{y} - f(x)|$ and $\mathrm{EE}$.  
\begin{proposition}
The transformed test error is bounded 
\begin{equation}
    \mathrm{TTE} \leq |T| \mathrm{TE} +  \mathrm{EE}. 
\end{equation}
\end{proposition}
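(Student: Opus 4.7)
The plan is a one-shot triangle-inequality argument, using $T(f(x))$ as a pivot point. This pivot is natural because it splits the discrepancy $T(\hat{y}) - f(T(x))$ into a piece measuring how well the model fits untransformed data and a piece measuring how much the model violates $G$-equivariance, exactly matching $\mathrm{TE}$ and $\mathrm{EE}$ on the right-hand side.

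First I would write
\[
  T(\hat{y}) - f(T(x)) = \bigl[T(\hat{y}) - T(f(x))\bigr] + \bigl[T(f(x)) - f(T(x))\bigr],
\]
take norms, and apply the triangle inequality to obtain
\[
  \mathrm{TTE} \leq |T(\hat{y}) - T(f(x))| + |T(f(x)) - f(T(x))|.
\]
The second summand is $\mathrm{EE}_T(x)$ by definition. For the first summand I would use that in every symmetry treated in the paper (rotation, translation, uniform motion, magnitude scaling, spatiotemporal scaling) $T$ acts linearly on the output field space, so $T(\hat{y}) - T(f(x)) = T(\hat{y} - f(x))$, and then invoke Lipschitz continuity of $T$ with constant $|T|$.

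The main obstacle, and the point I would handle most carefully, is the placement of the operator-norm factor $|T|$. The straightforward pivot above produces $\mathrm{TTE} \leq |T|\cdot\mathrm{TE} + \mathrm{EE}$, whereas the statement as written has $|T|$ multiplying $\mathrm{EE}$ instead of $\mathrm{TE}$. For the rotational, translational and uniform-motion actions $T$ is an isometry, so $|T| = 1$ and both forms of the bound collapse to $\mathrm{TTE} \leq \mathrm{TE} + \mathrm{EE}$, and the claim holds trivially. To recover the stated form uniformly I would repeat the triangle inequality after the alternative pivot $T^{-1}f(T(x))$, writing
\[
  T(\hat{y}) - f(T(x)) = T\bigl(\hat{y} - T^{-1}f(T(x))\bigr),
\]
pulling out $|T|$ in front, and then splitting the inner norm as $|\hat{y} - f(x)| + |T^{-1}(Tf(x) - f(T(x)))|$. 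This redistributes the Lipschitz factor onto the equivariance term at the cost of a $|T^{-1}|$, which equals $1$ precisely in the isometric cases of interest, reproducing the bound as written. The proof is complete by combining the two pivots and choosing the tighter of the resulting constants.
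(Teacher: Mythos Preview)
Your first pivot at $T(f(x))$ is exactly the paper's proof: the paper writes the single triangle inequality
\[
|T(\hat{y}) - f(T(x))| \leq |T(\hat{y}) - T(f(x))| + |T(f(x)) - f(T(x))| = |T|\,|\hat{y}-f(x)| + \mathrm{EE},
\]
and stops there. So the approach you call ``straightforward'' is the whole of the paper's argument.

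You are right that this yields $\mathrm{TTE}\leq |T|\,\mathrm{TE}+\mathrm{EE}$ rather than the displayed $\mathrm{TE}+|T|\,\mathrm{EE}$; the paper's own proof and its stated inequality do not match, and the paper makes no attempt to reconcile them. The most economical reading is that the placement of $|T|$ in the statement is a typo. Your second pivot through $T^{-1}f(T(x))$ is therefore unnecessary for matching the paper, and in any case it does not recover the stated form either: it produces $|T|\,\mathrm{TE}+|T|\,|T^{-1}|\,\mathrm{EE}$, which only coincides with $\mathrm{TE}+|T|\,\mathrm{EE}$ when $|T|=1$, the same isometric regime you already covered with the first pivot. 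So the extra maneuver buys nothing beyond what the direct argument already gives.
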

\begin{proof}
By the triangle inequality
\begin{align*}
|T(\hat{y}) - f(T(x))| &\leq |T(\hat{y}) - T(f(x))| +  |T(f(x)) - f(T(x))|  \\
                       &= |T| |\hat{y} - f(x) | + \mathrm{EE}. 
\end{align*}
\end{proof}
For uniform motion $\mathrm{TTE} \leq \mathrm{EE} + \mathrm{TE}$ since $|T(\hat{y}) - T(f(x))| = |\hat{y} + c  - f(x) - c|  = \mathrm{TE}$.  Consider $x$ and $y$ as flattened into a vector.  $|T| = \mathrm{sup}_{|x|=1} |T(x)|$ denotes the operator norm.  For $g \in SO(2)$, acting by $T_g$ on vector fields, $|T_g| = 1$. For scaling $T^\lambda(w)(x,t) = \lambda w(\lambda x, \lambda^2 t)$, $|T^\lambda| = \lambda / \sqrt{\lambda^4} = 1/\lambda$.

\subsection{Full Lists of Symmetries of Heat and NS Equations.}\label{lstsymm}
\textbf{Symmetries of NS Equations.} The Navier-Stokes equations are invariant under five different transformations (see e.g. \cite{olver2000applications}),
\begin{itemize*}
    \item Space translation: $T_{\bm{c}}^{\mathrm{sp}}\bm{w}(\bm{x}, t) = \bm{w}(\bm{x-c}, t)$, $\bm{c} \in \mathbb{R}^2$,
    \item Time translation: $T_{\tau}^{\mathrm{time}}\bm{w}(\bm{x}, t) = \bm{w}(\bm{x}, t-\tau)$, $\tau \in \mathbb{R}$,
    \item Uniform motion: $T_{\bm{c}}^{\mathrm{um}}\bm{w}(\bm{x}, t) = \bm{w}(\bm{x}, t) + \bm{c}$, $\bm{c} \in \mathbb{R}^2$,
    \item Reflect/rotation: $T_R^{\mathrm{rot}}\bm{w}(\bm{x}, t) = R\bm{w}(R^{-1}\bm{x}, t), R \in O(2)$,
    \item Scaling: $T_{\lambda}^{sc}\bm{w}(\bm{x},t) = \lambda\bm{w}(\lambda\bm{x}, \lambda^2t)$, $\lambda \in \mathbb{R}_{>0}$.
\end{itemize*}
Individually each of these types of transformations generates a group of symmetries of the system.  Collectively, they form a 7-dimensional symmetry group.    

\textbf{Symmetries of Heat Equation.} 
The heat equation has an even larger symmetry group than the NS equations \cite{olver2000applications}. 
Let $H(\bm{x},t)$ be a solution to \eqref{eqn:heat}.  Then the following are also solutions:
\begin{itemize*}
  \item Space translation: $H(\bm{x}-\bm{v},t)$, $\bm{v} \in \mathbb{R}^2$,
  \item Time translation: $H(\bm{x},t-c)$, $c \in \mathbb{R}$,
  \item Galilean: $e^{-\bm{v}\cdot \bm{x} + \bm{v} \cdot \bm{v} t} H(x - 2 \bm{v} t, t)$, $\bm{v} \in \mathbb{R}^2$
  \item Reflect/Rotation: $H(R\bm{x},t), R \in O(2)$,
  \item Scaling: $H(\lambda \bm{x}, \lambda^2 t)$, $\lambda \in \mathbb{R}_{>0}$
  \item Linearity: $\lambda H(\bm{x},t)$, $\lambda \in \mathbb{R}$ and $H(\bm{x},t)+H_1(\bm{x},t)$, $H_1 \in \mathrm{Sol}(\mathcal{D}_{\mathrm{heat}})$ 
  \item Inversion: $a(t) e^{-a(t)c \bm{x} \cdot \bm{x}} H(a(t) \bm{x}, a(t) t),$ where $a(t) = (1+4ct)^{-1}, c \in \mathbb{R}$.
\end{itemize*}

\begin{wrapfigure}{R}{0.4\linewidth}
\includegraphics[width= 0.95\linewidth]{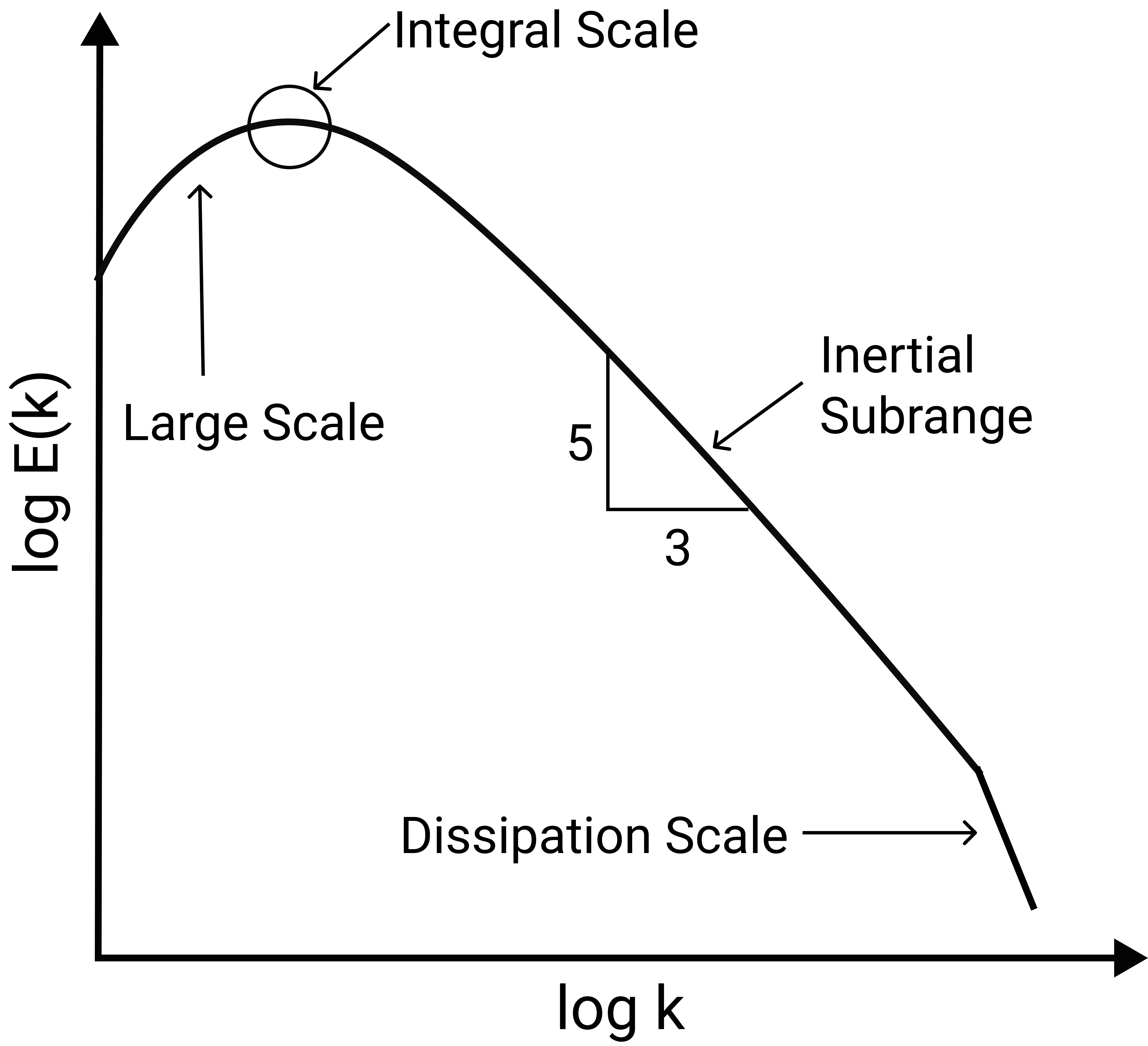}
\caption{Theoretical turbulence energy spectrum plot}
\label{spectrum}
\end{wrapfigure}

\subsection{Turbulence kinetic energy spectrum} \label{energy_spectrum}
The turbulence kinetic energy spectrum $E(k)$ is related to the mean turbulence kinetic energy as 
\begin{align*}
\int_{0}^{\infty}E(k)dk &= (\overline{(u^{'})^2} + \overline{(v^{'})^2})/2, \\ \overline{(u^{'})^2} &= \frac{1}{T}\sum_{t=0}^T(u(t) - \bar{u})^2,
\end{align*}
where the $k$ is the wavenumber and $t$ is the time step. Figure \ref{spectrum} shows a theoretical turbulence kinetic energy spectrum plot. The spectrum can describe the transfer of energy from large scales of motion to the small scales and provides a representation of the dependence of energy on frequency. Thus, the Energy Spectrum Error can indicate whether the predictions preserve the correct statistical distribution and obey the energy conservation law.  A trivial example that can illustrate why we need ESE is that if a model simply outputs moving averages of input frames, the accumulated RMSE of predictions might not be high but the ESE would be really big because all the small or even medium eddies are smoothed out.

\section{Heat diffusion} \label{heat_diffusion}
\textbf{2D Heat Equation.}
Let $H(t,x,y)$ be a scalar field representing temperature.  Then $H$ satisfies
\begin{equation}\label{eqn:heat}\tag{$\mathcal{D}_{\mathrm{heat}}$}
\frac{\partial H}{\partial t} = \alpha \Delta H.
\end{equation}
Here $\Delta = \partial_{x}^2 + \partial_{y}^2$ is the two-dimensional Laplacian and $\alpha \in \mathbb{R}_{>0}$ is the diffusivity. 

The Heat Equation plays a major role in studying heat transfer, Brownian motion and particle diffusion.  We simulate the  heat equation at various initial conditions and thermal diffusivity using the finite difference method and generate 6$k$ scalar temperature fields.  Figure \ref{heat_fig} shows a heat diffusion process where the temperature inside the circle is higher than the outside and the thermal diffusivity is 4. Since the heat equation is much simpler than the NS equations, a shallow \texttt{CNN} suffices to forecast the heat diffusion process.

\begin{figure}[hbt!]
\centering
\includegraphics[width= 0.7\textwidth]{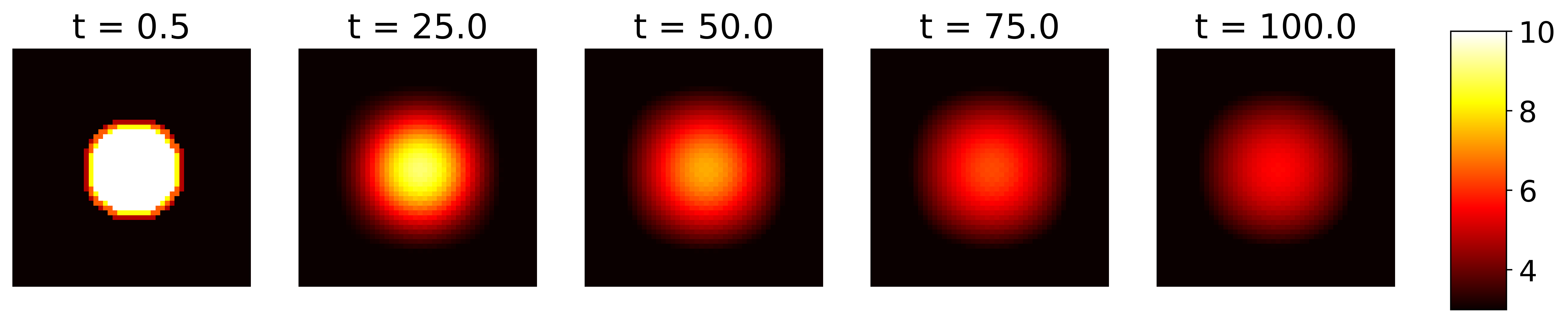}
\vspace{0pt}
\caption{Five snapshots in heat diffusion dynamics. The spatial resolution is 50$\times$50 pixels.}
\label{heat_fig}
\end{figure}

For heat diffusion, due to the law of energy conservation, the sum of each temperature field should be consistent over the entire heat diffusion process. We evaluate the physical characteristics of the predictions using the L1 loss of the thermal energy. Table \ref{heat} shows the prediction RMSE and thermal energy loss of the \texttt{CNNs} and three \texttt{Equ-CNNs} on three transformed test sets. We can see that \texttt{Equ-CNNs} consistently outperform \texttt{CNNs} over three test sets. 

\begin{table}[ht!]
\footnotesize
\centering
\begin{center}
\caption{The prediction RMSE and thermal energy L1 loss of the \texttt{CNNs} and three \texttt{Equ-CNNs} on three \textbf{transformed} test sets. \texttt{Equ-CNNs} outperform the \texttt{CNNs} over all three test sets.}
\label{heat}
\begin{tabular}{p{1.4cm}|p{2.1cm}p{2.1cm}p{2.1cm}}\toprule
\multirow{2}{*}{\diagbox[width = 6em, height = 2.8\line, innerrightsep=0pt, innerleftsep=1pt]{$\textrm{Models}$}{$\textrm{Testsets}$}}&\multicolumn{3}{c}{$\textrm{RMSE  (Thermal Energy Loss)}$} \\
\cmidrule{2-4} 
 & $\emph{Mag}$ & $\emph{Rot}$ & $\emph{Scale}$\\ \midrule
$\texttt{CNNs}$ & 0.103 (4696.3)& 0.308 (1125.6) & 0.357 (1447.6) \\
$\texttt{Equ-CNNs}$ & \textbf{0.028 (107.7)} & \textbf{0.153 (127.3)}&\textbf{0.045 (396.6)} \\
\bottomrule
\end{tabular}
\end{center}
\end{table}

\section{Implementation details} \label{implementation}
\subsection{Datasets Description}
\paragraph{Rayleigh-B\'enard convection} Rayleigh-B\'enard convection results from a horizontal layer of fluid heated from below, which is a major feature of the  El Nino dynamics. The dataset comes from two dimensional turbulent flow simulated using the Lattice Boltzmann Method \citep{Dragos-thesis} with Rayleigh number $=2.5\times 10^8$. We divided each 1792 $\times$ 256 image into 7 square sub-regions of size 256 $\times$ 256, then downsample them into 64 $\times$ 64 pixels sized images. Figure \ref{snapshot} in appendix shows a snapshot in our RBC flow dataset. We generate the following test sets to test the models' generalization ability.

\begin{itemize}[leftmargin=*]
\item \emph{Uniform motion (UM)}: transformed test sets by adding random vectors drawn from $U(-1, 1)$. 
\item \emph{Magnitude (Mag)}: transformed test sets by multiplying  random values sampled from $U(0, 2)$. 
\item \emph{Rotation (Rot)}: transformed test sets by randomly rotated by the multiples of $\pi/12$.
\item  \emph{Scale}: transformed test sets by scaling each sample $\lambda$ sampled from $U(1/5, 2)$.
\end{itemize}

\begin{figure}[hbt!]
\centering
\begin{minipage}[b]{0.48\textwidth}
   \includegraphics[width=\textwidth]{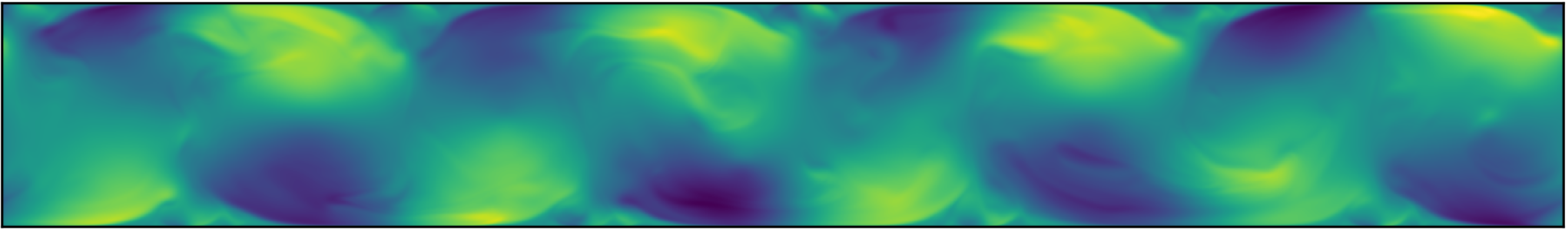}
  \end{minipage} 
  \begin{minipage}[b]{0.48\textwidth}
     \includegraphics[width=\textwidth]{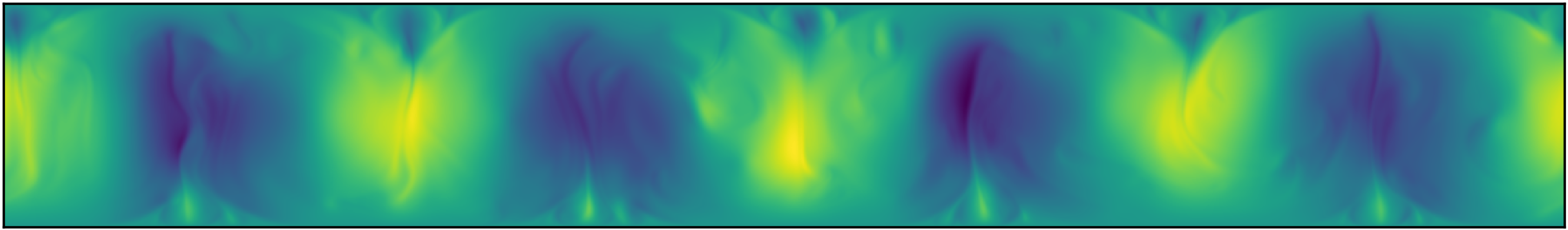}
  \end{minipage}
\caption{A snapshot of the Rayleigh-B\'enard convection flow, the velocity fields along $x$ direction (left) and $y$ direction (right) \citep{Dragos-thesis}. The spatial resolution is 1792$\times$256 pixels.}
\label{snapshot}
\end{figure}

\paragraph{Ocean Currents} We used the reanalysis ocean currents velocity data generated by the NEMO (Nucleus for European Modeling of the Ocean) simulation engine \footnote{The data are available at \url{https://resources.marine.copernicus.eu/?option=com_csw&view=details&product_id=GLOBAL_ANALYSIS_FORECAST_PHY_001_024}}.
We selected an area from each of the Atlantic, Indian and North Pacific Oceans from 01/01/2016 to 08/18/2017 and extracted 64$\times$64 sub-regions for our experiments. The corresponding latitude and longitude ranges for the selected regions are (-44$\sim$-23, 25$\sim$46),  (55$\sim$76, -39$\sim$-18) and (-174$\sim$-153, 5$\sim$26) respectively. We not only test all models on the future data but also on a different domain (-180$\sim$-159, -40$\sim$-59) in South Pacific Ocean from 01/01/2016 to 12/15/2016. 
Also, the most recent work on this dataset is \citep{PDE-CDNN}, which unified a warping scheme and an U-net to predict temperature. So to compare our equivariant models with state-of-arts, we also investigate our models on the task of temperature field predictions. Since the data back to year 2006 that \citep{PDE-CDNN} used is no longer available, we collect more recent temperature data from a square region (-50$\sim$-20, 20$\sim$50) in Atlantic Ocean from 01/01/2016 to 12/31/2017.

\subsection{Experiments Setup}
We tested our convolutional equivariant layers in two architecture, 18-layer $\texttt{ResNet}$ and 13-layer $\texttt{U-net}$.  One of our goals is to show that adding equivariance improves the physical accuracy of state-of-the-art dynamics prediction. $\texttt{ResNet}$ and $\texttt{U-net}$ are the popular state-of-the-art methods at the moment and our equivariance techniques are well-suited for their architecture. The reason we did not use recurrent models, such as Convolutional LSTM, is that they are slow to train especially for our case where the input length is large. This does not fit our long-term goal of accelerating computation. 

The input to each model is a $l \times 64 \times 64 \times 2$-size tensor representing the past $l$ timesteps of the velocity field. The output is a single velocity field.  The value of $l$ is a hyper-parameter we tuned.  We found the optimal value of $l$ to be around $l=25$.  To predict more timesteps, we apply the model autoregressively, dropping the oldest timestep and concatenating the prediction to the input. 

To make this a fair comparison, we adjust the hidden dimensions for different equivariant models to make sure that the number of parameters in all models are about the same for either architecture, which can be found in Table \ref{params}. Table \ref{hyper} gives the hyper-parameter tuning ranges for our models. Note that the hidden dimension and the number of layers of the shallow CNNs for the heat diffusion task are also well-tuned. 

The loss function used is the MSE between the predicted frames and the ground truth for next $k$ steps, where $k$ is a parameter we tuned. We found $k= 3$ or $4$ give the best performance. We use 60\%-20\%-20\% training-validation-test split in time and use the validation set for hyper-parameters tuning based on the average error of predictions. The training set corresponds to the first 60\% of the entire dataset in time and the validation/test sets contains the following 40\%. For fluid flows, we standardize the data by the average of velocity vectors and the standard deviation of the L2 norm of velocity vectors. For sea surface temperature, we did the exact same data preprocessing described in \citet{PDE-CDNN}.

\begin{table*}[htb]
\centering
\small
\caption{The number of parameters in each model and time costs for training an epoch on 8 V100 GPUs.}
\vspace{-2pt}
\begin{center}
\begin{tabular}{p{1.8cm}|p{0.65cm}p{0.65cm}p{0.65cm}p{0.65cm}p{0.65cm}|p{0.75cm}|p{0.65cm}p{0.65cm}p{0.65cm}p{0.65cm}p{0.65cm}}\toprule
\cmidrule{1-7} \cmidrule{8-12}
$\textbf{ResNet}$& $\emph{Reg}$ & $\emph{UM}$ & $\emph{Mag}$ & $\emph{Rot}$ & $\emph{Scale}$ &$\textbf{U-net}$& $\emph{Reg}$ & $\emph{UM}$ & $\emph{Mag}$ & $\emph{Rot}$ & $\emph{Scale}$ \\ \midrule
Params ($10^6$)  & 11.0 & 11.0 & 11.0 & 10.2 & 10.7 && 6.2 &  6.2&  6.2& 7.1 & 5.9 \\
\midrule
$\emph{Time} (min)$ & 3.04 &  5.21 & 5.50 & 14.31 &160.32  && 2.15 &  4.32&  4.81& 11.32 & 135.72\\
\bottomrule
\end{tabular}
\end{center}
\label{params}
\end{table*}

\begin{table*}[htb]
\centering
\small
\caption{The Hyper-parameter tuning range: Learning rate, the number of accumulated errors for backpropogation, the number of input frames, batch size, and the hidden dimension and the number of layers of the shallow CNNs for heat diffusion}
\begin{center}
\begin{tabular}{p{1.7cm}|p{2.0cm}|p{1.7cm}|p{1.5cm}|p{2.4cm}|p{2.0cm}}\toprule
\cmidrule{1-6} 
 $\emph{Learning rate}$ & $\emph{\#Accum Errors}$ & $\emph{\#Input frames}$ & $\emph{Batch Size}$ & $\emph{Hidden dim (CNNs)}$ & $\emph{\#Layers (CNNs)}$ \\ \midrule
1e-1 $\sim$ 1e-6 & 1$\sim$10 & 1$\sim$30 & 4$\sim$64 & 8$\sim$128 & 1$\sim$10\\
\bottomrule
\end{tabular}
\end{center}
\label{hyper}
\end{table*}

\section{Additional results}\label{app_figures}
Table \ref{temp_rmse} shows the RMSEs of temperature predictions. Figure \ref{unet_preds} shows the ground truth and the predicted velocity norm fields ($\sqrt{u^2 + v^2}$)  at time step $1$, $5$ and $10$ by the \texttt{U-net} and four \texttt{Equ-Unet} on the four transformed test samples. Figure \ref{ocean_resnet2} shows the ground truth and the predicted ocean currents ($\sqrt{u^2 + v^2}$) at time step $5$ and $10$ by the regular \texttt{ResNet} and four \texttt{Equ-ResNets} on the test set of future time.

\begin{table*}[ht!]
\centering
\small
\caption{The RMSEs of temperature predictions on test data. For equivariant models, the left number in the cell is \texttt{ResNet} and the right number in the cell is \texttt{U-net}}
\begin{center}
\label{ocean}
\begin{tabular}{P{0.8cm}|P{1cm}P{1.1cm}P{0.8cm}P{0.7cm}P{1.4cm}P{1.3cm}P{1.3cm}P{1.3cm}}\toprule
& \texttt{CLSTM} & $\texttt{Bézenac}$ & $\texttt{ResNet}$ & $\texttt{U-net}$ & $\texttt{Equ$_{\text{UM}}$}$ &  $\texttt{Equ$_{\text{Mag}}$}$  & $\texttt{Equ$_{\text{Rot}}$}$ & $\texttt{Equ$_{\text{Scal}}$}$ \\ \midrule
$\textrm{RMSE}$  & 0.46 & 0.38 & 0.41 & 0.391 & 0.38 | \textbf{0.37} & 0.39 | 0.37  & 0.38 | 0.40  & 0.42 | 0.41\\
\bottomrule
\end{tabular}
\end{center}
\label{temp_rmse}
\end{table*}

\begin{figure*}[htb!]
  \begin{minipage}[b]{0.245\textwidth}
   \includegraphics[width=\textwidth]{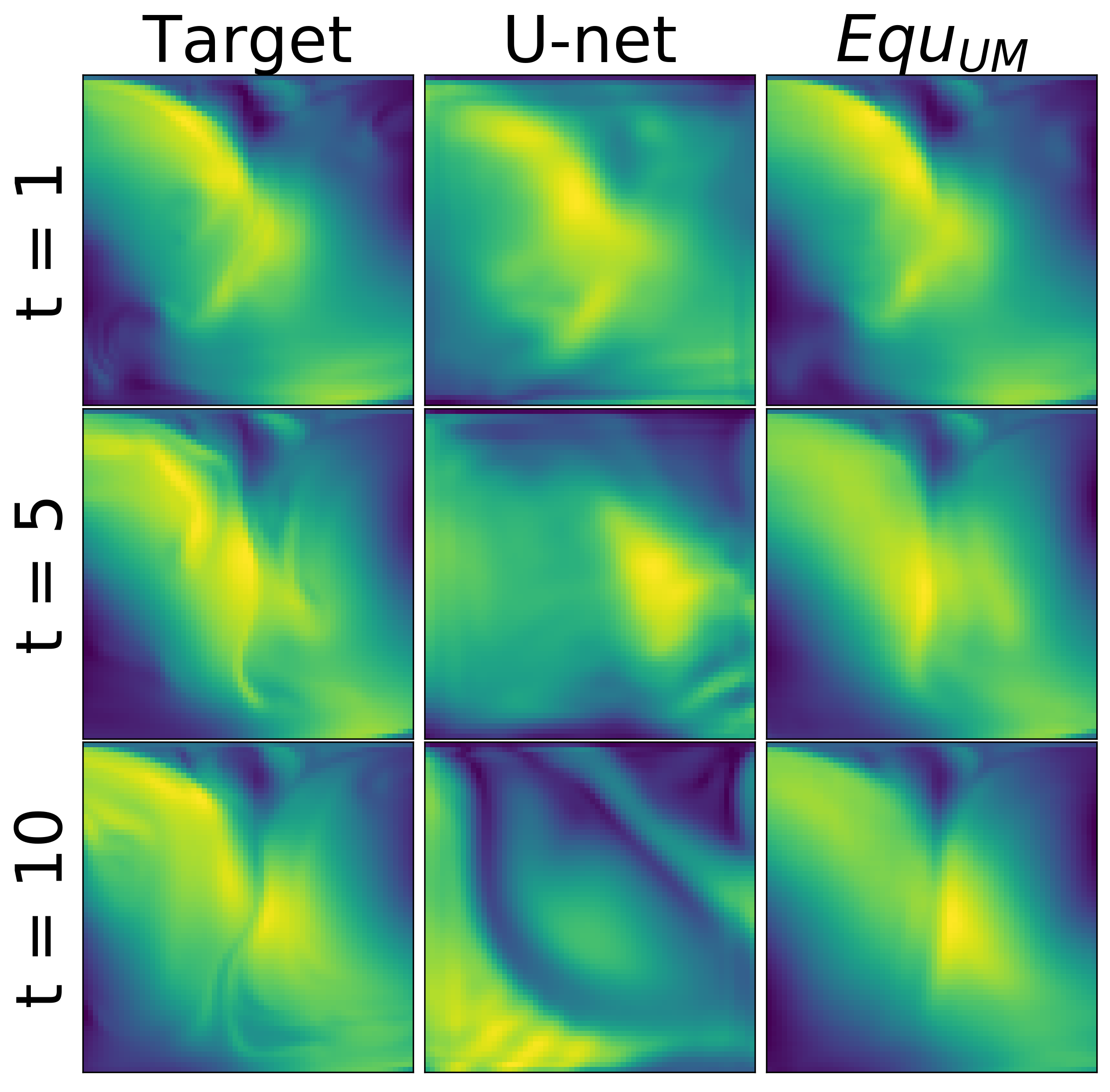}
  \end{minipage} \hfill
  \begin{minipage}[b]{0.245\textwidth}
     \includegraphics[width=\textwidth]{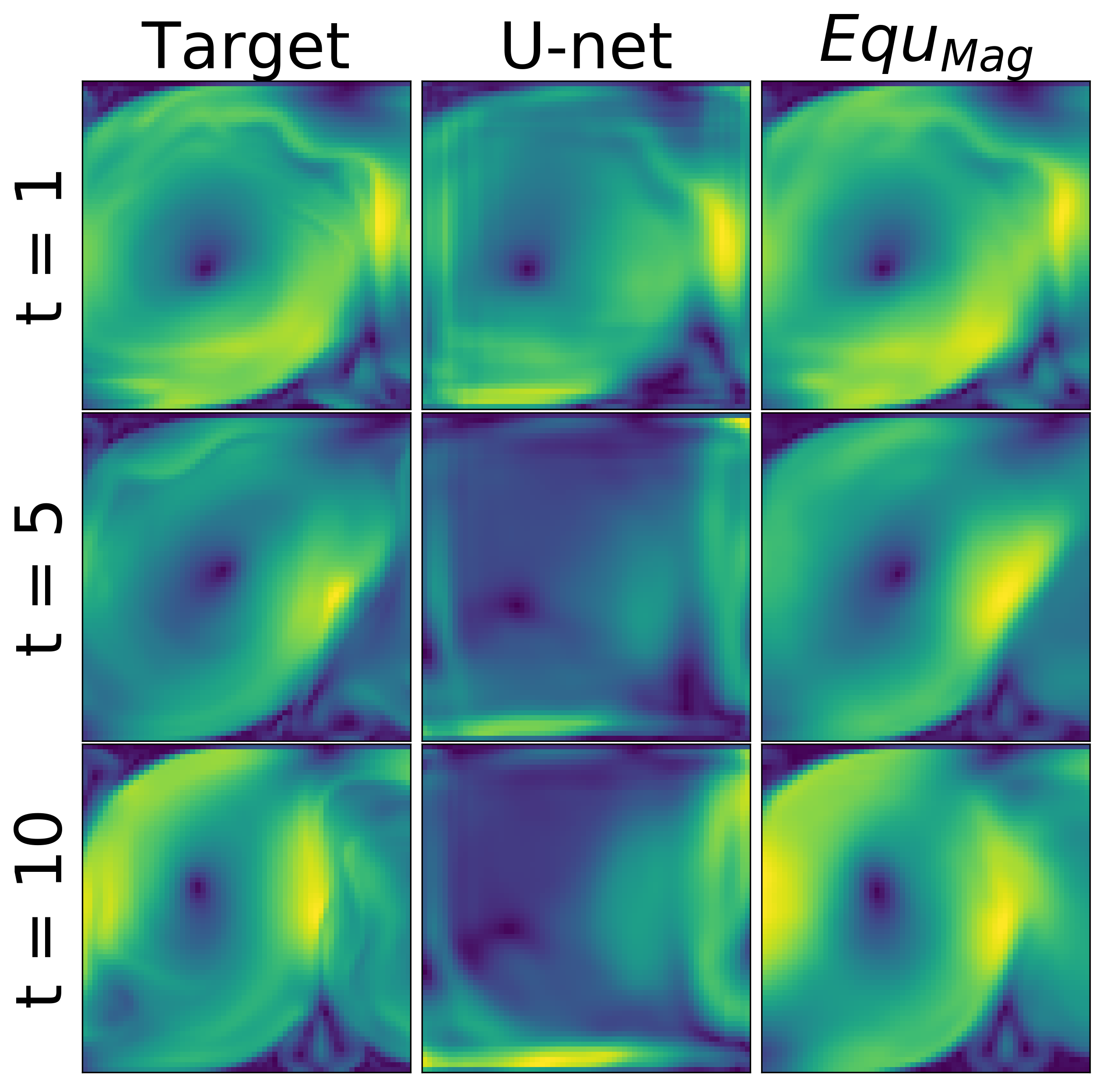}
  \end{minipage} \hfill
  \begin{minipage}[b]{0.245\textwidth}
\includegraphics[width=\textwidth]{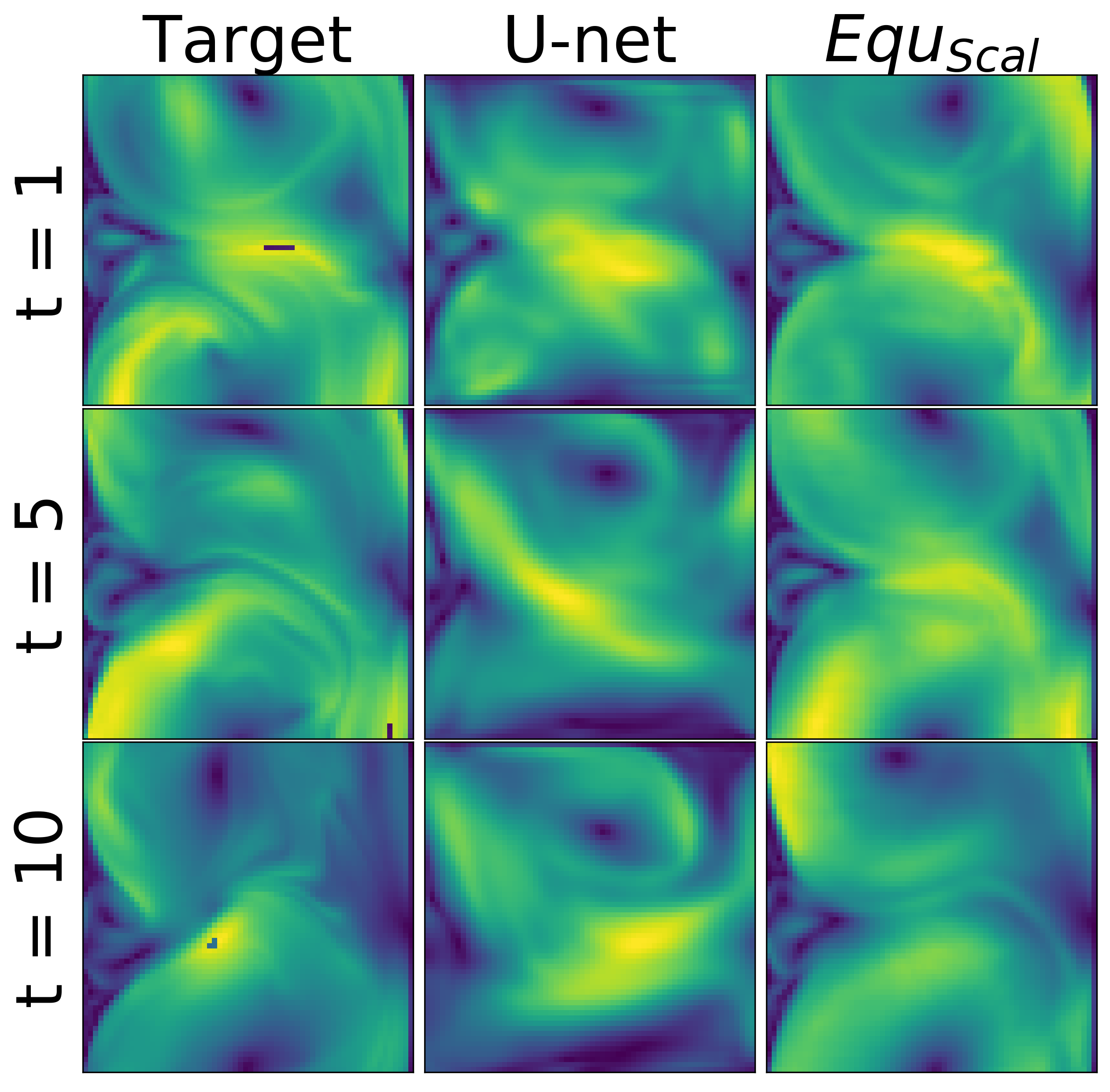}
  \end{minipage} \hfill
  \begin{minipage}[b]{0.245\textwidth}
\includegraphics[width=\textwidth]{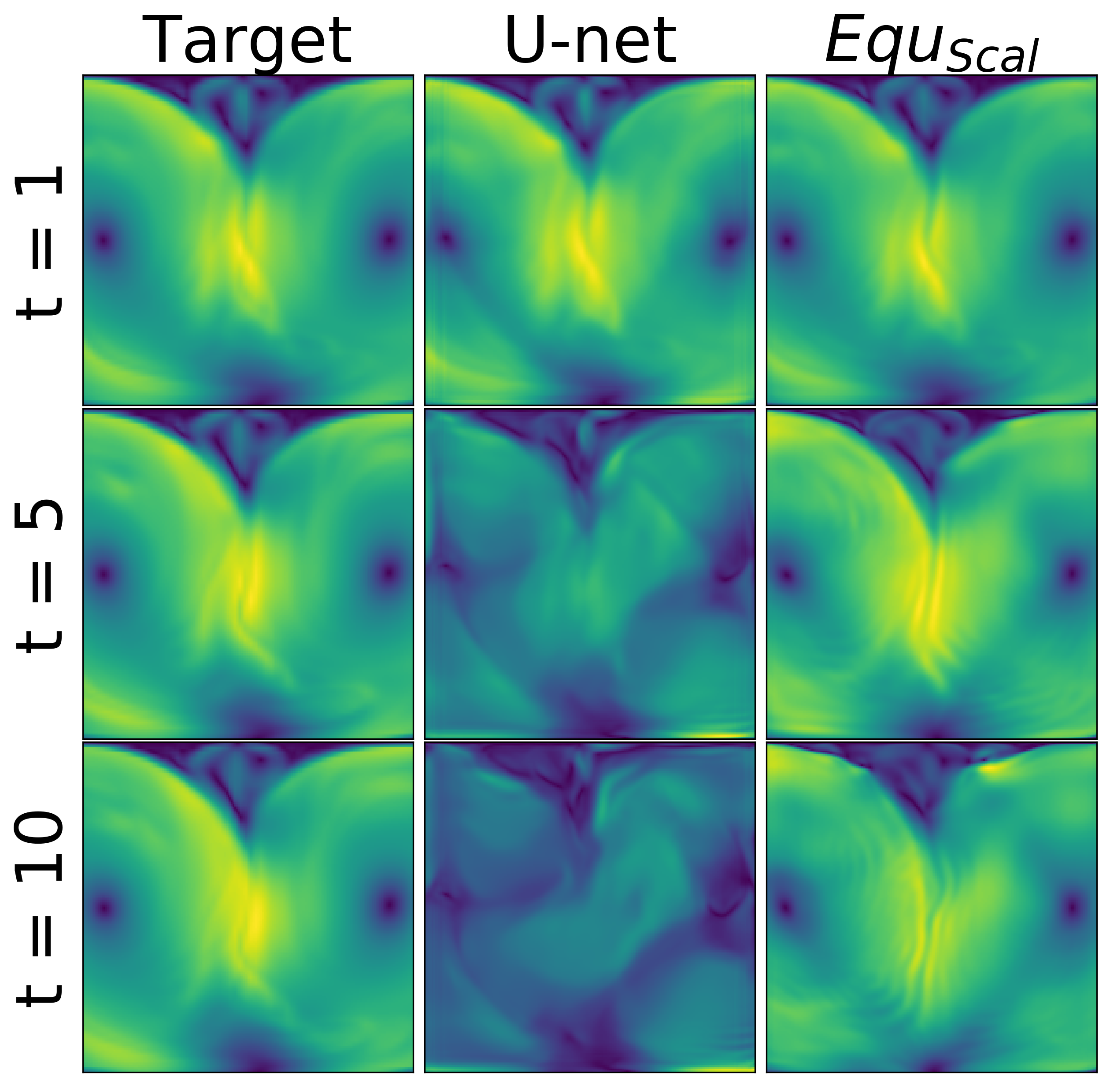}
  \end{minipage} 
\caption{The ground truth and the predicted velocity norm fields ($\sqrt{u^2 + v^2}$)  at time step $1$, $5$ and $10$ by the \texttt{U-net} and four \texttt{Equ-Unet} on the four transformed test samples. From left to right, the transformed test samples are the original test samples uniform-motion-shifted by $(1,-0.5)$, magnitude-scaled by 1.5, rotated by 90 degrees and upscaled by 3 respectively. The first row is the target, the second row is \texttt{Equ-Unets} predictions, and the third row is predictions by \texttt{U-net}.}
\label{unet_preds}
\end{figure*}

\begin{figure}[hbt!]
\centering
\includegraphics[width=\textwidth]{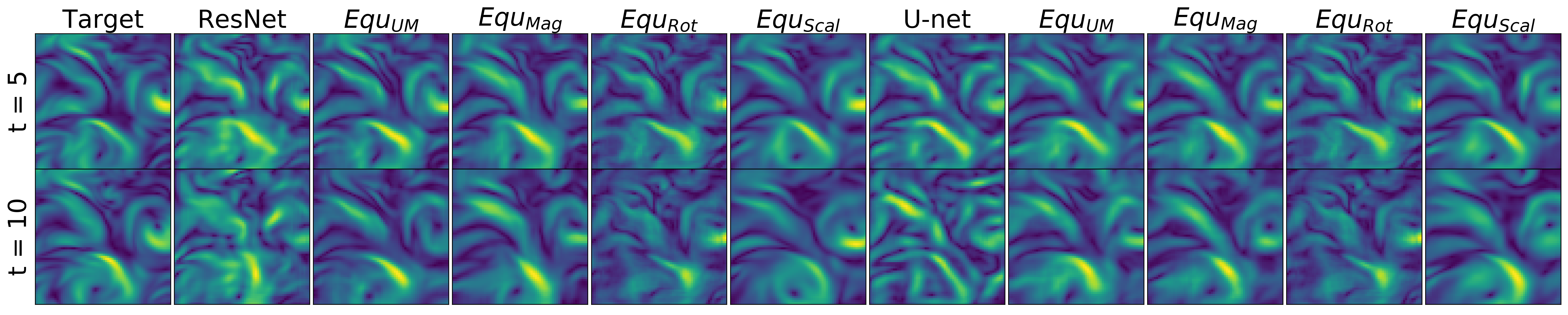}
\caption{The ground truth and the predicted ocean currents ($\sqrt{u^2 + v^2}$) at time step $5$ and $10$ by the regular \texttt{ResNet} and four \texttt{Equ-ResNets} on the test set of future time.}
\label{ocean_resnet2}
\end{figure}

\end{document}